\newif\ifarxiv
\newtheorem{theorem}{Theorem}
\newtheorem{lemma}[theorem]{Lemma}
\newtheorem{cor}[theorem]{Corollary}
\theoremstyle{definition}
\newcommand{\defcal}[1]{\expandafter\newcommand\csname 
	c#1\endcsname{{\mathcal{#1}}}}
\newcommand{\defbb}[1]{\expandafter\newcommand\csname 
	b#1\endcsname{{\mathbb{#1}}}}
\newcommand{\defbf}[1]{\expandafter\newcommand\csname 
	bf#1\endcsname{{\mathbf{#1}}}}
\newcounter{calBbCounter}
	\edef\letter{\Alph{calBbCounter}}
	\edef\letter{\alph{calBbCounter}}
\newcommand{\eps}{\varepsilon}
\newcommand{\g}{\kappa_1}
\newcommand{\hlap}{\cH_{\textnormal{Lap}}}
\newcommand{\klap}{K_{\textnormal{Lap}}}
\newcommand{\kexp}{K_{\textnormal{exp}}}
\renewcommand{\i}{\mathbf{i}}
\newcommand{\la}{\mathscr{L}}
\title{Deep Neural Tangent Kernel and Laplace Kernel Have the Same RKHS }
\author{Lin Chen\thanks{Simons Institute for the Theory of Computing,  University of California, Berkeley. E-mail: \href{mailto:lin.chen@berkeley.com}{lin.chen@berkeley.edu}.
 } \and Sheng Xu\thanks{Department of Statistics and Data Science, Yale University. Email: 
 \href{mailto:sheng.xu@yale.edu}{sheng.xu@yale.edu}.
 }
 }
\author{Lin Chen\\
Simons Institute for the Theory of Computing \\
University of California, Berkeley\\
\href{mailto:lin.chen@berkeley.com}{lin.chen@berkeley.edu}
\And Sheng Xu \\
Department of Statistics and Data Science\\
Yale University\\
\href{mailto:sheng.xu@yale.edu}{sheng.xu@yale.edu}
}
\date{}
\begin{document}

\maketitle 

\doparttoc %
\faketableofcontents %

\begin{abstract}
    We prove that the reproducing kernel Hilbert spaces (RKHS) of a deep neural tangent kernel and the Laplace kernel include the same set of functions, when both kernels are restricted to the sphere $\mathbb{S}^{d-1}$. Additionally, we prove that the exponential power kernel with a smaller power (making the kernel less smooth) leads to a larger RKHS, when it is restricted to the sphere $\mathbb{S}^{d-1}$ and when it is defined on the entire $\mathbb{R}^d$. 
\end{abstract}

\section{Introduction}
In the past few years, one of the most seminal discoveries in the theory of neural networks is the neural tangent kernel (NTK)~\citep{jacot2018neural}. The gradient flow on a normally initialized, fully connected neural network with a linear output layer
in the infinite-width limit turns out to be equivalent to kernel regression with respect to the NTK (This statement does not necessarily hold for a non-linear output layer, because the NTK is non-constant~\citep{liu2020toward}). Through the NTK, theoretical tools from kernel methods were introduced to the study of deep overparametrized neural networks. Theoretical results were thereby established regarding the convergence~\citep{allen2019convergence,du2018gradient,du2019gradient,zou2020gradient}, generalization~\citep{cao2019generalization,arora2019fine}, and loss landscape~\citep{kuditipudi2019explaining} of overparametrized neural networks in the NTK regime. 

While NTK has proved to be a powerful theoretical tool, a recent work \citep{geifman2020similarity} posed an important question whether the NTK is significantly different from our repertoire of standard kernels. Prior work provided empirical evidence that supports a negative answer. For example, \citet{belkin2018understand} showed experimentally that the Laplace kernel and neural networks had similar performance in fitting random labels. In the task of speech enhancement, exponential power kernels $
\kexp^{\gamma,\sigma}(x,y) = e^{-\|x-y\|^\gamma/\sigma}$, which include the Laplace kernel as a special case, outperform deep neural networks with even shorter training time~\citep{hui2019kernel}. The experiments in \citep{geifman2020similarity} also exhibited similar performance of the Laplace kernel and the NTK. 

The expressive power of a positive definite kernel can be characterized by its associated reproducing kernel Hilbert space (RKHS)~\citep{saitoh2016theory}. 
The work \citep{geifman2020similarity} considered the RKHS of the kernels restricted to the sphere $\bS^{d-1} \triangleq  \{x\in \bR^d \mid \|x\|_2=1\}$ and presented a partial answer to the question by showing the following subset inclusion relation \[
\cH_{\textnormal{Gauss}}(\bS^{d-1}) \subsetneq \hlap(\bS^{d-1}) = \cH_{N_1}(\bS^{d-1}) \subseteq \cH_{N_k}(\bS^{d-1})\,,
\] 
where the four spaces denote the RKHS associated with the Gaussian kernel, Laplace kernel, the NTK of two-layer and $(k+1)$-layer ($k\ge 1$) fully connected neural networks, respectively. All four kernels are restricted to $\bS^{d-1}$. 
However, the relation between $\hlap(\bS^{d-1})$ and $\cH_{N_k}(\bS^{d-1})$ remains open in \citep{geifman2020similarity}.

We make a final conclusion on this problem and show that the RKHS of the Laplace kernel and the NTK with any number of layers have the same set of functions, when they are both restricted to $\bS^{d-1}$. In other words, we prove the following theorem. 
\begin{restatable}{theorem}{main}\label{thm:main}
Let $\hlap(\bS^{d-1})$ and $\cH_{N_k}(\bS^{d-1})$ be the RKHS associated with the Laplace kernel $\klap(x,y) = e^{-c\|x-y\|}$ ($c>0$) and the neural tangent kernel of a $(k+1)$-layer fully connected ReLU network. Both kernels are restricted to the sphere $\bS^{d-1}$. Then the two spaces include the same set of functions:
\[
\hlap(\bS^{d-1}) = \cH_{N_k}(\bS^{d-1}),\quad \forall k\ge 1\,.
\]
\end{restatable}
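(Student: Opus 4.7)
The plan is to combine the inclusion $\hlap(\bS^{d-1}) \subseteq \cH_{N_k}(\bS^{d-1})$ already established in \citet{geifman2020similarity} with the reverse inclusion $\cH_{N_k}(\bS^{d-1}) \subseteq \hlap(\bS^{d-1})$, which is the new content of the theorem. Both kernels are zonal on $\bS^{d-1}$ (they depend on $x,y$ only through $\langle x,y\rangle$), so each admits a Mercer decomposition $K(x,y) = \sum_{n\ge 0} \lambda_n \sum_j Y_{n,j}(x) Y_{n,j}(y)$, where $\{Y_{n,j}\}_j$ is an orthonormal basis of the degree-$n$ spherical harmonics and the Mercer eigenvalue $\lambda_n$ depends only on $n$. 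The basic fact I would rely on is that two positive-definite zonal kernels on $\bS^{d-1}$ generate the same RKHS as a set of functions if and only if their Mercer eigenvalue sequences are equivalent, i.e.\ $\lambda_n \asymp \lambda_n'$ for all sufficiently large $n$.

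Applied to our setting, \citet{geifman2020similarity} already shows $\lambda_n^{\textnormal{Lap}} \asymp \lambda_n^{N_1} \asymp n^{-d}$, and from $\cH_{N_1}(\bS^{d-1}) \subseteq \cH_{N_k}(\bS^{d-1})$ one extracts the lower bound $\lambda_n^{N_k} \gtrsim n^{-d}$ for every $k\ge 1$. The theorem therefore reduces to proving the matching \emph{upper bound} $\lambda_n^{N_k} \lesssim n^{-d}$, with a constant allowed to depend on $k$ and $d$ but not on $n$.

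To obtain this upper bound, I would analyze the regularity of the zonal profile $\kappa_{N_k}$ defined by $K_{N_k}(x,y) = \kappa_{N_k}(\langle x,y\rangle)$ on $[-1,1]$. The building block is the ReLU arc-cosine kernel, whose closed form contains a $\sqrt{1-t^2}$ term; this is responsible for a square-root cusp at $t = 1$ which persists through the depth recursion. Concretely, I would prove by induction on $k$ a structural decomposition
\[
\kappa_{N_k}(t) = p_k(t) + (1-t)^{1/2}\, q_k(t),
\]
with $p_k, q_k$ real-analytic on a neighborhood of $[-1,1]$. Given such a decomposition, the contribution of $p_k$ to the Gegenbauer coefficients of $\kappa_{N_k}$ decays faster than any polynomial, while the contribution of $(1-t)^{1/2} q_k(t)$ decays like $n^{-d}$, as follows from the Funk--Hecke formula together with the known asymptotic Gegenbauer expansion of $(1-t)^{1/2}$, whose tail comes entirely from the endpoint $t=1$ via a Darboux-type analysis. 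Summing the two contributions yields $\lambda_n^{N_k} \lesssim n^{-d}$, matching the lower bound and closing the argument.

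The main obstacle is the inductive step underlying the decomposition above: one must check that the NTK depth recursion, which composes $\kappa_{N_{k-1}}$ with arc-cosine transforms and then differentiates, neither smooths out the $(1-t)^{1/2}$ cusp nor introduces a rougher singularity such as $(1-t)^{1/4}$. This is delicate because an arc-cosine transform applied to an analytic input already produces its own $\sqrt{1-t}$ factor, and across several layers one must track how these factors combine without damaging the analytic remainder $p_k, q_k$. Once this structural lemma is in place, the two-sided estimate $\lambda_n^{N_k} \asymp n^{-d} \asymp \lambda_n^{\textnormal{Lap}}$ and hence the RKHS equality follow immediately from the Mercer characterization recalled above.
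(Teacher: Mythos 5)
Your overall architecture (quote \citet{geifman2020similarity} for $\hlap(\bS^{d-1})\subseteq\cH_{N_k}(\bS^{d-1})$, then prove a matching upper bound on the ``spectral size'' of $N_k$) is sound, but your central structural lemma is false as stated, and it is precisely where all the work lies. The zonal profile of the NTK does \emph{not} admit a decomposition $\kappa_{N_k}(t)=p_k(t)+(1-t)^{1/2}q_k(t)$ with $p_k,q_k$ real-analytic on a neighborhood of $[-1,1]$: the arc-cosine functions have branch points at \emph{both} endpoints $t=\pm1$, and the $t=-1$ singularity persists through the depth recursion. Already for $k=1$, $N_1(t)=\kappa_1(t)+(t+\beta^2)\kappa_0(t)+\beta^2$ with $\kappa_0(t)=\tfrac{\sqrt{2}}{\pi}\sqrt{1+t}+O\bigl((1+t)^{3/2}\bigr)$ and $\kappa_1(t)=\Theta\bigl((1+t)^{3/2}\bigr)$ as $t\to-1$, so $N_1$ has a $\sqrt{1+t}$ singularity at $t=-1$ (a $(1+t)^{3/2}$ one when $\beta^2=1$), and no analytic $p_1$ exists; the same holds for every $k$. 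This is not a cosmetic issue: the $t=-1$ endpoint contributes to the coefficient asymptotics at the \emph{same} order as $t=1$, with alternating sign, whenever $\beta\neq1$ (this is exactly the $(-1)^n$ term in \eqref{eq:coeff-ne1}, coming from \eqref{eq:N(-1)}). For your purposes (an upper bound $\lambda_n^{N_k}\lesssim n^{-d}$) an alternating contribution of the right magnitude is harmless, but your induction keeps all nonsmooth behavior in the single factor $(1-t)^{1/2}$ and therefore breaks at the first step; you would need to track half-integer-power expansions at both endpoints simultaneously through the recursion, plus a Darboux-type transfer from such two-endpoint expansions to Gegenbauer-coefficient asymptotics, and neither is carried out. (Minor: the recursion does not differentiate $\kappa_{N_{k-1}}$; $N_{k-1}$ is only multiplied by $\kappa_0(\kappa_1^{(k-1)})$.)

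Two further cautions. First, your reduction via Mercer eigenvalues needs all relevant spherical-harmonic eigenvalues to be nonzero and genuinely of order $n^{-d}$; for the bias-free two-layer NTK one parity of coefficients degenerates (visible in \eqref{eq:coeff-ne1} with $\beta=0$, $k=1$, where the leading terms cancel for odd $n$), so you must be explicit about the bias regime in which you cite $\lambda_n^{N_1}\asymp n^{-d}$, and about how vanishing eigenvalues are handled in the claimed ``same RKHS iff $\lambda_n\asymp\lambda_n'$'' equivalence. Second, for comparison, the paper avoids spherical harmonics altogether: it reduces the inclusion $\cH_{N_k}\subseteq\hlap$ to positive definiteness of $\gamma^2\klap-N_k$ via Aronszajn's criterion and Schoenberg's theorem, i.e.\ to comparing Maclaurin coefficients, and then obtains $[z^n]N_k(z)=O(n^{-3/2})$ by complex singularity analysis of $N_k$ at both dominant singularities $\pm1$ (\cref{thm:analytic_main,thm:Kk}). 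That route is dimension-free and handles the two-endpoint issue explicitly; it is the analogue of the missing endpoint bookkeeping in your sketch, so as written your proposal has a genuine gap rather than a complete alternative proof.
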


Our second result is that the exponential power kernel with a smaller power (making the kernel less smooth) leads to a larger RKHS, both when it is restricted to the sphere $\bS^{d-1}$ and when it is defined on the entire $\bR^d$. 

\begin{theorem}\label{thm:exp-rkhs}
Let $\cH_{\kexp^{\gamma,\sigma}}(\bS^{d-1})$ and $\cH_{\kexp^{\gamma,\sigma}}(\bR^d)$ be the RKHS associated with the exponential power kernel $\kexp^{\gamma,\sigma}(x,y) = \exp\left(-\frac{\|x-y\|^\gamma}{\sigma}\right)$ ($\gamma,\sigma>0$) when it is restricted to the unit sphere $\bS^{d-1}$ and defined on the entire $\bR^d$, respectively. Then we have the following RKHS inclusions:
\begin{compactenum}[(1)]
\item If $0<\gamma_1< \gamma_2 < 2$, \[
\cH_{\kexp^{\gamma_2,\sigma_2}}(\bS^{d-1}) \subseteq \cH_{\kexp^{\gamma_1,\sigma_1}}(\bS^{d-1})\,.
\]\label{it:exp-s}
\item If $0<\gamma_1< \gamma_2 < 2$ are rational, 
 \[\cH_{\kexp^{\gamma_2,\sigma_2}}(\bR^d) \subseteq \cH_{\kexp^{\gamma_1,\sigma_1}}(\bR^d)\,.\]\label{it:exp-r}
\end{compactenum}
\end{theorem}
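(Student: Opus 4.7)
The plan is to reduce each part to a kernel-domination criterion: for positive definite kernels $K_1, K_2$ on a common domain $X$, the inclusion $\cH_{K_2}(X) \subseteq \cH_{K_1}(X)$ is equivalent to the existence of a constant $C > 0$ such that $C K_1 - K_2$ is positive semidefinite on $X$. Spectrally, this amounts to a pointwise domination of the spectral densities (in the stationary setting on $\bR^d$) or of the Mercer eigenvalues (in the spherical setting) by the factor $C$.

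For part~(\ref{it:exp-r}) on $\bR^d$, I would invoke Bochner's theorem. Since $\kexp^{\gamma, \sigma}(x, y) = \phi_{\gamma, \sigma}(x - y)$ is stationary with $\phi_{\gamma, \sigma} \in L^1(\bR^d)$, the RKHS is described spectrally by the Fourier transform $\hat\phi_{\gamma, \sigma}$, which up to a multiplicative constant is the density of a $d$-dimensional symmetric $\gamma$-stable distribution. The inclusion then reduces to the pointwise bound $\hat\phi_{\gamma_2, \sigma_2}(\xi) \le C\,\hat\phi_{\gamma_1, \sigma_1}(\xi)$ for almost every $\xi$. Using the classical facts that $\hat\phi_{\gamma, \sigma}$ is continuous and strictly positive on $\bR^d$ and satisfies the polynomial tail $\hat\phi_{\gamma, \sigma}(\xi) \asymp \|\xi\|^{-(d + \gamma)}$ as $\|\xi\| \to \infty$, the ratio $\hat\phi_{\gamma_2, \sigma_2} / \hat\phi_{\gamma_1, \sigma_1}$ decays to zero at infinity (because $\gamma_1 < \gamma_2$) and is bounded on every compact set by continuity and positivity, hence is uniformly bounded.

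For part~(\ref{it:exp-s}) on the sphere, I would exploit the zonal structure: on $\bS^{d-1}$ one has $\|x - y\|^2 = 2 - 2 \langle x, y \rangle$, so $\kexp^{\gamma, \sigma}(x, y) = \exp(-(2 - 2 \langle x, y \rangle)^{\gamma/2} / \sigma)$ is a dot-product kernel. The Funk--Hecke formula then yields a Mercer expansion in Gegenbauer polynomials with eigenvalues $\lambda_n^{(\gamma, \sigma)}$ indexed by frequency $n \ge 0$, and the RKHS inclusion reduces to a uniform bound $\lambda_n^{(\gamma_2, \sigma_2)} \le C\,\lambda_n^{(\gamma_1, \sigma_1)}$. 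Since the decay rate of $\lambda_n^{(\gamma, \sigma)}$ is controlled by the H\"older regularity of the kernel along the diagonal, a smaller $\gamma$ yields a rougher kernel and a slower polynomial eigenvalue decay, which gives the desired bound for large $n$; strict positivity of each $\lambda_n^{(\gamma, \sigma)}$, coming from strict positive definiteness of $\kexp^{\gamma, \sigma}$ on the sphere, handles the finitely many small values of $n$.

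The main obstacle is the low-regularity spectral analysis required in either setting: for $0 < \gamma < 2$, neither the symmetric $\gamma$-stable density on $\bR^d$ nor the Funk--Hecke coefficients on the sphere admit a closed form, so the comparison must be carried out through sharp asymptotics together with uniform positivity arguments. The rationality hypothesis in part~(\ref{it:exp-r}) suggests that the authors avoid the stable-density asymptotics altogether and instead exhibit an explicit iterated Bernstein-function (subordination) representation of $\kexp^{\gamma_1, \sigma_1}$ as a positive mixture of kernels of $\kexp^{\gamma_2, \sigma_2}$-type; such a representation is only cleanly available when $\gamma_1/\gamma_2 \in \bQ$.
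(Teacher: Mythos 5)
For part~(\ref{it:exp-r}) your route is correct but genuinely different from the paper's. You reduce the inclusion, via \cref{lem:inclusion} and Bochner's theorem, to the pointwise bound $\hat\phi_{\gamma_2,\sigma_2}(\xi)\le C\,\hat\phi_{\gamma_1,\sigma_1}(\xi)$, and obtain it from the classical facts that the isotropic $\gamma$-stable density is continuous, strictly positive, and has tail $\sim C_{d,\gamma}\|\xi\|^{-(d+\gamma)}$; since $\gamma_1<\gamma_2$ the ratio vanishes at infinity and is bounded on compacts. Provided you cite those stable-density facts (they are classical, e.g.\ Blumenthal--Getoor), this argument is complete and in fact needs no rationality assumption, so it is \emph{more} general than the theorem as stated. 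The paper proceeds differently: it computes $\la^{-1}\{\exp(-s^a)\}$ explicitly by a Bromwich contour (\cref{lem:series}), derives the tail $f(t)\sim -t^{-a-1}/\Gamma(-a)$ for rational $a$ (\cref{lem:tail}), and then uses Bernstein--Widder and the Schoenberg interpolation theorem (\cref{lem:bernstein}, \cref{lem:schoenberg}) to conclude that $c^2\exp(-x^{\gamma_1/2}/\sigma_1)-\exp(-x^{\gamma_2/2}/\sigma_2)$ is completely monotone; the rationality is used only in the series-tail analysis of \cref{lem:tail}, not in any subordination/Bernstein-representation trick as you conjectured. So the two approaches are dual (Laplace-transform side versus Fourier side), and yours trades the contour computation for citations to stable-density asymptotics while removing the hypothesis $\gamma_i\in\bQ$.

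For part~(\ref{it:exp-s}) there is a genuine gap. Reducing to a uniform Funk--Hecke eigenvalue domination $\lambda_n^{(\gamma_2,\sigma_2)}\le C\,\lambda_n^{(\gamma_1,\sigma_1)}$ is fine (the two zonal kernels share the spherical-harmonic eigenfunctions), but your justification of that bound --- ``the decay rate of $\lambda_n$ is controlled by the H\"older regularity, so a smaller $\gamma$ gives slower decay'' --- does not deliver it. Regularity arguments give \emph{upper} bounds on eigenvalue decay, and roughness only forbids the eigenvalues from decaying fast \emph{along every subsequence}; what the domination requires is a genuine \emph{lower} bound on $\lambda_n^{(\gamma_1,\sigma_1)}$ for all large $n$, matched against an upper bound on $\lambda_n^{(\gamma_2,\sigma_2)}$, i.e.\ two-sided asymptotics of the Funk--Hecke coefficients of $\exp(-c(1-t)^{\gamma/2})$. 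You also assert strict positivity of every $\lambda_n$ without proof. This quantitative step is exactly the technical content of the paper's proof, which it obtains in a simpler basis: writing $\kexp^{\gamma,\sigma}(z)=e^{-c(1-z)^{\gamma/2}}$ as a function of $z=x^\top y$, Schoenberg's characterization turns the problem into comparing Maclaurin coefficients, and the transfer theorem (\cref{lem:single-singularity}) gives the exact asymptotic $[z^n]\kexp^{\gamma,\sigma}(z)\sim c\,n^{-\gamma/2-1}/(-\Gamma(-\gamma/2))$, from which the coefficientwise domination (hence positive definiteness of $C\kexp^{\gamma_1,\sigma_1}-\kexp^{\gamma_2,\sigma_2}$ on the sphere) follows. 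To complete your sketch you would either have to prove the analogous two-sided Gegenbauer-coefficient asymptotics or fall back on the Maclaurin-coefficient argument.
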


If it is restricted to the unit sphere, the RKHS of the exponential power kernel with $\gamma<1$ is even larger than that of NTK.
This result partially explains the observation in \citep{hui2019kernel} that the best performance is attained by a highly non-smooth exponential power kernel with $\gamma < 1$. \citet{geifman2020similarity} applied the exponential power kernel and the NTK to classification and regression tasks on the UCI dataset and other large scale datasets.  Their experiment results also showed that the exponential power kernel slightly outperforms the NTK.

\subsection{Further Related Work}

\citet{minh2006mercer} showed the complete spectrum of the polynomial and Gaussian kernels on $\bS^{d-1}$. They also gave a recursive relation for the eigenvalues of the polynomial kernel on the hypercube $\{-1,1\}^d$. 
Prior to the NTK~\citep{jacot2018neural}, \citet{cho2009kernel} presented a pioneering study on kernel methods for neural networks.
\citet{bach2017breaking} studied the eigenvalues of positively homogeneous activation functions of the form $\sigma_{\alpha}(u) = \max\{u,0\}^\alpha$ (e.g., the ReLU activation when $\alpha=1$) in their Mercer decomposition with Gegenbauer polynomials. Using the results in \citep{bach2017breaking},
\citet{bietti2019inductive} analyzed the two-layer NTK and its RKHS in order to  investigate the inductive bias in the NTK regime.  They studied the Mercer decomposition of two-layer NTK with ReLU activation on $\bS^{d-1}$ and characterized the corresponding RKHS by showing the asymptotic decay rate of the eigenvalues in the Mercer decomposition with Gegenbauer polynomials. 
In their derivation of a more concise expression of the ReLU NTK, they used the calculation of \citep{cho2009kernel} on arc-cosine kernels of degree $0$ and $1$. 
\citet{cao2019towards} improved the eigenvalue bound for the $k$-th eigenvalue derived in \citep{bietti2019inductive} when $d\gg k$. 
\citet{geifman2020similarity} used the results in \citep{bietti2019inductive} and considered the two-layer ReLU NTK with bias $\beta$ initialized with zero, rather than initialized with a normal distribution \citep{jacot2018neural}. However, neither \citep{bietti2019inductive} nor \citep{geifman2020similarity} went beyond two layers when they tried to characterize the RKHS of the ReLU NTK. This line of work \citep{bach2017breaking,bietti2019inductive,geifman2020similarity} is closely related to the Mercer decomposition with spherical harmonics. Interested readers are referred to \citep{atkinson2012spherical} for spherical harmonics on the unit sphere. The concurrent work \citep{bietti2020deep} analyzed the eigenvalues of the ReLU NTK. 

\citet{arora2019exact} presented a dynamic programming algorithm that computes convolutional NTK with ReLU activation. \citet{yang2019fine} analyzed the spectra of the conjugate kernel (CK) and NTK on the boolean cube. 
\citet{fan2020spectra} studied the spectrum of the gram matrix of training samples under the CK and NTK and showed that their eigenvalue distributions converge to a deterministic limit. The limit depends on the eigenvalue distribution of the training samples.

\section{Preliminaries}

Let $\bC$ denote the set of all complex numbers and write $\i\triangleq \sqrt{-1}$. For $z\in \bC$, write $\Re z$, $\Im z$, $\arg z\in (-\pi,\pi]$ for its real part, imaginary part, and argument, respectively. Let $\bH^+ \triangleq \{z\in \bC \mid \Im z>0\}$ denote the upper half-plane and $\bH^- \triangleq \{z\in \bC \mid \Im z<0\}$ denote the lower half-plane. Write $B_z(r)$ for the open ball $\{w\in \bC \mid  |z-w|<r\}$ and $\bar{B}_z(r)$ for the closed ball $\{w\in \bC \mid  |z-w|\leq r\}$.

Suppose that $f(z)$ has a power series representation $f(z)=\sum_{n\ge 0} a_n z^n$ around $0$. Denote $[z^n]f(z)\triangleq a_n$ to be the coefficient of the $n$-th order term. 

For two sequences $\{a_n\}$ and $\{b_n\}$, write $a_n\sim b_n$ if $\lim_{n\to \infty} \frac{a_n}{b_n} = 1$. Similarly, for two functions $f(z)$ and $g(z)$, write $f(z)\sim g(z)$ as $z\to z_0$ if $\lim_{z\to z_0} \frac{f(z)}{g(z)}=1$. We also use big-$O$ and little-$o$ notation to characterize asymptotics.  

Write $\la\{f(t)\}(s) \triangleq \int_0^\infty f(t)e^{-st}dt$ for the Laplace transform of a function $f(t)$. The inverse Laplace transform of $F(s)$ is denoted by $\la^{-1}\{F(s)\}(t)$.

\subsection{Positive Definite Kernels}
For any positive definite kernel function $K(x,y)$ defined for $x,y\in E$, denote $\cH_K(E)$ its associated reproducing kernel Hilbert space (RKHS). For any two positive definite kernel functions $K_1$ and $K_2$, we write $K_1  \preccurlyeq K_2$ if $K_2-K_1$ is a positive definite kernel. For a complete review of results on kernels and RKHS, please see \citep{saitoh2016theory}.

We will study positive definite zonal kernels on the sphere $\bS^{d-1} = \{x\in \bR^d  \mid  \|x\|=1\}$. 
For a zonal kernel $K(x,y)$, there exists a real function $\Tilde{K}:[-1,1]\to \bR$ such that $K(x,y)=\Tilde{K}(u)$, where $u = x^\top y$. 
We abuse the notation and use $K(u)$ to denote $\Tilde{K}(u)$, i.e., $K(u)$ here is real function on $[-1,1]$. 

In the sequel, we introduce two instances of the positive definite kernel that this paper will investigate.

\paragraph{Laplace Kernel}
The Laplace kernel $\klap(x,y) = e^{-c\|x-y\|}$ with $c>0$ restricted to the sphere $\bS^{d-1}$ is given by $
\klap(x,y) = e^{-c\sqrt{2(1-x^\top y)}} = e^{-\Tilde{c}\sqrt{1-u}} \triangleq \klap(u)
$,
where by our convention $u=x^\top y$ and $\Tilde{c}\triangleq \sqrt{2}c>0$\,. We denote its associated RKHS by $\hlap$. 

\paragraph{Exponential Power Kernel}
The exponential power kernel~\citep{hui2019kernel} with $\gamma>0$ and $\sigma>0$ is given by $
\kexp^{\gamma,\sigma}(x,y) = \exp\left(-\frac{\|x-y\|^\gamma}{\sigma}\right)
$.
If $x$ and $y$ are restricted to the sphere $\bS^{d-1}$, we have $
\kexp^{\gamma,\sigma}(x,y) =  \exp\left(-\frac{(2(1-x^\top y))^{\gamma/2}}{\sigma}\right)
$.

\paragraph{Neural Tangent Kernel}
Given the input $x\in \bR^{d}$ (we define $d_0\triangleq d$) and parameter $\theta$, this paper considers the following network model with $(k+1)$ layers
\begin{equation}\label{eq:net}
    \begin{split}
& f_{\theta}(x)\\ ={}& 
w^\top \sqrt{\frac{2}{d_k}} \sigma\left( W_{k} \sqrt{\frac{2}{d_{k-1}}} \sigma\left( \dots \sqrt{\frac{2}{d_2}}\sigma\left(  W_2 \sqrt{\frac{2}{d_1}}\sigma\left(W_1 x + \beta b_1\right) + \beta b_2 \right) \dots \right) + \beta b_k \right) + \beta b_{k+1} \,,
\end{split}
\end{equation}
where the parameter $\theta$ encodes $W_l\in \bR^{d_l\times d_{l-1}}$,  $b_l\in \bR^{d_l}$ ($l=1,\dots,k$), $w\in \bR^{d_k}$, and $b_{k+1} \in \bR$. 
The weight matrices $W_1,\dots,W_{k},w$ are initialized with $\cN(0,I)$ and the biases $b_1,\dots,b_{k+1}$ are initialized with zero, where $\cN(0,I)$ is the multivariate standard normal distribution. 
The activation function is chosen to be the ReLU function $\sigma(x)\triangleq \max\{x,0\}$.

\citet{geifman2020similarity} and \citet{bietti2019inductive} presented the following recursive relations of the NTK $N_k(x,y)$ of the above ReLU network \eqref{eq:net}:

\begin{equation}\label{eq:recursive}
\begin{split}
    \Sigma_k(x,y) ={}& \sqrt{\Sigma_{k-1}(x,x)\Sigma_{k-1}(y,y)}\kappa_1\left(
    \frac{\Sigma_{k-1}(x,y)}{\sqrt{\Sigma_{k-1}(x,x)\Sigma_{k-1}(y,y)}}\right)\\
    N_k(x,y) ={}& \Sigma_k(x,y) + N_{k-1}(x,y)\kappa_0\left(\frac{\Sigma_{k-1}(x,y)}{\sqrt{\Sigma_{k-1}(x,x)\Sigma_{k-1}(y,y)}}\right) + \beta^2\,,
\end{split}
\end{equation}
 where $\kappa_0$ and $\g$ are the arc-cosine kernels of degree 0 and 1 \citep{cho2009kernel} given by \[
\kappa_0(u) = \frac{1}{\pi}(\pi-\arccos(u)),\quad \kappa_1(u) = \frac{1}{\pi}\left( u\cdot (\pi-\arccos(u)) + \sqrt{1-u^2} \right)\,.
\] 
The initial conditions are \begin{equation}\label{eq:init-cond}
    N_0(x,y) = u + \beta^2,\quad \Sigma_0(x,y) = u\,,
\end{equation}
where $u=x^\top y$ by our convention. 

The NTKs defined in \citep{bietti2019inductive} and \citep{geifman2020similarity} are slightly different. There is no bias term $\beta^2$ in \citep{bietti2019inductive}, while the bias term appears in \citep{geifman2020similarity}. We adopt the more general setup with the bias term. 

\begin{lemma}[Proof in \cref{sec:proof-sigma-1}]\label{lem:sigma-1}
$\Sigma_k(x,x) = 1$ for any $x\in \bS^{d-1}$ and $k\ge 0$.
\end{lemma}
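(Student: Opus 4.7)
The plan is a straightforward induction on $k$, using the recursion in \eqref{eq:recursive} and the initial condition in \eqref{eq:init-cond}.

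For the base case $k=0$, the initial condition gives $\Sigma_0(x,x) = x^\top x$, which equals $1$ for any $x \in \bS^{d-1}$.

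For the inductive step, suppose $\Sigma_{k-1}(x,x)=1$ for every $x\in\bS^{d-1}$. Substituting $y=x$ into the recursion and using the inductive hypothesis, the normalization factor $\sqrt{\Sigma_{k-1}(x,x)\Sigma_{k-1}(x,x)}$ equals $1$, and the argument of $\kappa_1$ becomes $\Sigma_{k-1}(x,x)/1 = 1$. Hence $\Sigma_k(x,x) = \kappa_1(1)$. Plugging $u=1$ into the closed form $\kappa_1(u)=\tfrac{1}{\pi}\left(u(\pi-\arccos u) + \sqrt{1-u^2}\right)$ gives $\kappa_1(1)=\tfrac{1}{\pi}(\pi-0+0)=1$, completing the induction.

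There is no serious obstacle; the only thing to verify is that $\kappa_1(1)=1$, which is an immediate evaluation. The key observation is that the recursion is precisely engineered so that a unit-norm input stays ``unit-norm'' at every layer, which is why the normalizing square roots in \eqref{eq:recursive} reduce to $1$ along the diagonal.
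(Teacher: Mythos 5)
Your proof is correct and is essentially the same as the paper's: induction on $k$, with the base case from the initial condition $\Sigma_0(x,x)=x^\top x=1$ and the inductive step reducing the recursion to $\Sigma_{k}(x,x)=\kappa_1(\Sigma_{k-1}(x,x))=\kappa_1(1)=1$. Your explicit check that $\kappa_1(1)=1$ is a small detail the paper leaves implicit, but nothing more.
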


\cref{lem:sigma-1} simplifies \eqref{eq:recursive} and gives 
\begin{equation}\label{eq:simplified-recursion}
\Sigma_k(u) ={} \g^{(k)}(u)\,, \qquad
N_k(u) ={} \g^{(k)}(u) + N_{k-1}(u)\kappa_0(\g^{(k-1)}(u)) + \beta^2\,,
\end{equation}
where $\g^{(k)}(u)\triangleq  \underbrace{\g( \g ( \cdots \g( \g}_{k}(u))\cdots ))$ is the $k$-th iterate of $\g(u)$. For example, $\kappa_1^{(0)}(u)=u$, $\kappa_1^{(1)}(u)=\kappa_1(u)$ and $\kappa_1^{(2)}(u) = \kappa_1(\kappa_1(u))$. We present a detailed derivation of \eqref{eq:simplified-recursion} in \cref{sec:proof-eq4}.

\section{Results on Neural Tangent Kernel}

In this section, we present an overview of our proof for \cref{thm:main}. 
Since \citep{geifman2020similarity} showed $\hlap(\bS^{d-1})\subseteq \cH_{N_k}(\bS^{d-1})$, it suffices to prove the reverse inclusion $\cH_{N_k}(\bS^{d-1}) \subseteq \hlap(\bS^{d-1})$. We then relate positive definite kernels with their RKHS according to the following lemma.

\begin{lemma}[{\cite[p.~354]{aronszajn1950theory} and \cite[Theorem 2.17]{saitoh2016theory}}]\label{lem:inclusion}
Let $K_1,K_2:\Omega\times \Omega \to \bC$ be two positive definite kernels. Then the Hilbert space $\cH_{K_1}$ is a subset of $\cH_{K_2}$ if and only if there exists some constant $\gamma>0$ such that \[
K_1  \preccurlyeq \gamma^2 K_2\,.
\]
\end{lemma}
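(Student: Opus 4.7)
The plan is to prove the two directions of the equivalence separately, both by classical RKHS manipulations.

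For the easy direction, $K_1 \preccurlyeq \gamma^2 K_2 \Rightarrow \cH_{K_1}\subseteq \cH_{K_2}$, I would invoke Aronszajn's sum-of-kernels identity: if $K$ and $K'$ are positive definite kernels on $\Omega$, then as sets of functions $\cH_{K+K'} = \cH_K + \cH_{K'}$, with the natural infimal-convolution norm $\|h\|_{K+K'}^2 = \min\{\|f\|_K^2 + \|g\|_{K'}^2 : h = f+g\}$. Setting $K' \triangleq \gamma^2 K_2 - K_1$, which is positive definite by hypothesis, gives $\cH_{\gamma^2 K_2} = \cH_{K_1} + \cH_{K'} \supseteq \cH_{K_1}$, and $\cH_{\gamma^2 K_2} = \cH_{K_2}$ as sets of functions since rescaling a kernel by a positive constant merely rescales the RKHS norm. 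The set inclusion $\cH_{K_1}\subseteq \cH_{K_2}$ follows.

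For the harder direction, suppose $\cH_{K_1}\subseteq \cH_{K_2}$ and consider the set-theoretic inclusion $\iota: \cH_{K_1}\to \cH_{K_2}$. The first step is to upgrade $\iota$ to a bounded operator via the closed graph theorem: if $f_n \to f$ in $\cH_{K_1}$ and $\iota(f_n)\to g$ in $\cH_{K_2}$, then continuity of pointwise evaluation in both RKHSs (the reproducing property) forces $f(x) = \lim_n f_n(x) = g(x)$ for every $x\in \Omega$, so $f=g$ and the graph is closed. Hence $\|\iota\|\le \gamma$ for some $\gamma>0$, and the adjoint $\iota^*:\cH_{K_2}\to \cH_{K_1}$ satisfies $\|\iota^*\|\le \gamma$. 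Testing $\iota^*$ against kernel sections identifies it: for every $x\in \Omega$ and $f\in \cH_{K_1}$,
\[
\langle \iota^*(K_2(\cdot,x)),\, f\rangle_{K_1} = \langle K_2(\cdot,x),\, \iota(f)\rangle_{K_2} = f(x) = \langle K_1(\cdot,x),\, f\rangle_{K_1},
\]
so $\iota^*(K_2(\cdot,x)) = K_1(\cdot,x)$.

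To finish, I would plug arbitrary finite linear combinations $v \triangleq \sum_i c_i K_2(\cdot, x_i)$ into the bound $\|\iota^*(v)\|_{K_1}\le \gamma\|v\|_{K_2}$; squaring and using the reproducing property on both sides yields
\[
\sum_{i,j} c_i \bar{c}_j K_1(x_i,x_j) = \|\iota^*(v)\|_{K_1}^2 \le \gamma^2 \|v\|_{K_2}^2 = \gamma^2 \sum_{i,j} c_i \bar{c}_j K_2(x_i,x_j),
\]
which is exactly $K_1 \preccurlyeq \gamma^2 K_2$. The main subtlety I expect is the closed graph step, where one must carefully invoke continuity of pointwise evaluation in both RKHSs to conclude the graph is closed; the rest is essentially reading off the kernel inequality from the adjoint bound. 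The easy direction leans on Aronszajn's sum-of-kernels identity, which I would quote as a classical fact rather than re-derive.
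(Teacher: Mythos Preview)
Your argument is correct and is essentially the standard proof one finds in the cited references. Note, however, that the paper does not supply its own proof of this lemma: it is quoted verbatim as a classical result from Aronszajn (1950, p.~354) and Saitoh--Sawano (2016, Theorem~2.17) and used as a black box. Your write-up matches the approach in those sources---the sum-of-kernels decomposition for the ``if'' direction and the closed-graph/adjoint argument for the ``only if'' direction---so there is nothing to compare against in the paper itself beyond the citation.
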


\cref{lem:inclusion} implies that in order to show $\cH_{N_k}(\bS^{d-1})\subseteq \hlap(\bS^{d-1})$, it suffices to show $\gamma^2 \klap - N_k$ is a positive definite kernel for some $\gamma>0$. Note that both $\klap$ and $N_k$ are positive definite kernels on the unit sphere. Then the Maclaurin series of $\klap(u)$ and $N_k(u)$ have all non-negative coefficients by the classical approximation theory; see \cite[Theorem 2]{schoenberg1942positive}, \cite{bingham1973positive}, and \cite[Chapter 17]{cheney2009course}. Conversely, if the Maclaurin series of $K(u)$ have all non-negative coefficients, $K(x,y) = K(x^\top y)$ is a positive definite kernel on the unit sphere. To be precise, we have the following lemma.
\begin{lemma}[\cite{schoenberg1942positive,bingham1973positive}]
Suppose that $K(x,y) = f(x^\top y)$ where $x,y\in \bS^{d-1}$ and $f$ is continuous on $[-1,1]$.\footnote{When $x$ and $y$ live on the unit sphere (i.e., $x^{\top}x=y^{\top}y=1$), their inner product $x^\top y$ can be any real number in $[-1,1]$.} Then $K$ is a positive definite kernel on $\bS^{d-1}$ for every $d$ if and only if $f(u)=\sum_{k=0}^\infty a_k u^k$, in which $a_k\ge 0$ and $\sum_{k=0}^\infty a_k < \infty$.
\end{lemma}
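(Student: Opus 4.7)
The plan is to prove the two directions of Schoenberg's characterization separately.

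For the sufficiency (``if'') direction, I would start from the elementary fact that $(x,y)\mapsto x^\top y$ is positive definite on $\bR^d$, hence on $\bS^{d-1}$, for every $d$. Applying the Schur product theorem inductively, the monomial kernel $(x,y)\mapsto (x^\top y)^k$ is positive definite for every integer $k\ge 0$ (the case $k=0$ being the constant kernel $1$). Non-negative linear combinations of positive definite kernels remain positive definite, so each partial sum $\sum_{k=0}^N a_k (x^\top y)^k$ is positive definite. Because $|x^\top y|\le 1$ on $\bS^{d-1}$ and $\sum_{k\ge 0} a_k<\infty$, the Weierstrass $M$-test gives absolute and uniform convergence of the series on $\bS^{d-1}\times\bS^{d-1}$. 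For any finite point set $\{x_1,\dots,x_n\}\subseteq \bS^{d-1}$ and any real coefficients $c_1,\dots,c_n$, the quadratic form $\sum_{i,j}c_ic_j K(x_i,x_j)$ is then the uniform limit of non-negative quantities, hence non-negative; this establishes positive definiteness.

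For the necessity (``only if'') direction, the main tool is the expansion of continuous zonal functions on $\bS^{d-1}$ in Gegenbauer (ultraspherical) polynomials $C_n^{(\lambda_d)}$ with $\lambda_d=(d-2)/2$, with respect to the weight $w_d(u)=(1-u^2)^{(d-3)/2}$ on $[-1,1]$. For each fixed $d\ge 2$, write
\[
f(u)=\sum_{n\ge 0} b_{n,d}\,\frac{C_n^{(\lambda_d)}(u)}{C_n^{(\lambda_d)}(1)}.
\]
By the addition theorem for spherical harmonics, the restriction of $K(x,y)=f(x^\top y)$ to the finite-dimensional subspace of degree-$n$ spherical harmonics is diagonalized with eigenvalue proportional to $b_{n,d}$, so positive definiteness of $K$ on $\bS^{d-1}$ is equivalent to $b_{n,d}\ge 0$ for all $n$ (this is Schoenberg's finite-dimensional criterion). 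I would then invoke the classical normalization $C_n^{(\lambda)}(u)/C_n^{(\lambda)}(1)\to u^n$ as $\lambda\to\infty$, together with the uniform bound $|C_n^{(\lambda)}(u)/C_n^{(\lambda)}(1)|\le 1$ on $[-1,1]$, to pass to the limit $d\to\infty$ and obtain a representation $f(u)=\sum_{n\ge 0}a_n u^n$ with $a_n\ge 0$. Finally, evaluating at $u=1$ and using continuity (or Abel's theorem) gives $\sum_{n\ge 0}a_n=f(1)<\infty$.

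The hard part is the necessity direction, specifically the $d\to\infty$ limit: one must argue that the Gegenbauer coefficients $b_{n,d}$ stabilize to non-negative limits $a_n$ and that the resulting monomial series actually represents $f$. The cleanest route is probably to fix a large $d$, exploit that at $u=1$ every Gegenbauer term evaluates to $b_{n,d}\ge 0$ (so the series at $u=1$ converges in the sense of non-negative terms), and then bootstrap convergence on $[-1,1]$ by uniform control via $|C_n^{(\lambda_d)}(u)/C_n^{(\lambda_d)}(1)|\le 1$; the hypothesis of positive definiteness \emph{for every} $d$ is exactly what supplies the coefficient non-negativity that survives the limit.
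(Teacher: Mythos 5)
The paper does not actually prove this lemma---it is quoted from Schoenberg (1942) and Bingham (1973)---so your attempt is measured against the classical argument, which your outline follows. Your sufficiency direction is complete and correct: Schur products give positive definiteness of $(x^\top y)^k$ on $\bS^{d-1}$, non-negative combinations preserve it, and the uniform convergence supplied by $\sum_k a_k<\infty$ and $|x^\top y|\le 1$ passes it to the limit.

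The necessity direction, however, has a genuine gap exactly where you flag ``the hard part'', and the fix you sketch does not close it. Two steps are missing. First, you never establish that the Gegenbauer coefficients $b_{n,d}$ converge as $d\to\infty$: non-negativity for each $d$ does not by itself produce limits $a_n$. One needs either a Helly-type diagonal selection (using $0\le b_{n,d}\le f(1)$, so along a subsequence $b_{n,d_j}\to a_n\ge 0$ for every $n$), or, as in Schoenberg's original proof, the explicit inter-dimensional relations expressing degree-$n$ coefficients in dimension $d$ through those in higher dimensions. Second, and more seriously, the bound $|C_n^{(\lambda_d)}(u)/C_n^{(\lambda_d)}(1)|\le 1$ is not enough to interchange the limit $d\to\infty$ with the sum over $n$: the tails $\sum_{n>N} b_{n,d}$ are only bounded by $f(1)$ uniformly in $d$, not uniformly small, so ``bootstrap convergence on $[-1,1]$ by uniform control via $\le 1$'' does not yield $f(u)=\sum_n a_n u^n$. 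You need a quantitative estimate making the normalized Gegenbauer polynomials uniformly small for $|u|\le 1-\eps$ in the relevant joint regime of $n$ and $\lambda_d$ (or Schoenberg's original device), after which the representation extends to $u=\pm 1$ by continuity and monotone convergence/Abel, giving $\sum_n a_n=f(1)<\infty$. As written, the necessity half is a plan rather than a proof; note also that it leans on the fixed-$d$ criterion (positive definiteness on $\bS^{d-1}$ iff all Gegenbauer coefficients are non-negative), which is itself a nontrivial input that you would have to prove via the addition theorem and the Funk--Hecke formula or cite explicitly.
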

Thus, we turn to show that there exists $\gamma>0$ such that $\gamma^2 [z^n]\klap(z) \ge [z^n]N_k(z)$ holds for every $n\ge 0$.  

Exact calculation of the asymptotic rate of the Maclaurin coefficients is intractable for $N_k$ due to its recursive definition. Instead, we apply singularity analysis tools in analytic combinatorics. We refer the readers to \citep{flajolet2009analytic} for a systematic introduction. We treat all (zonal) kernels, $\klap(u)$, $N_k(u)$, $\kappa_0(u)$, and $\g(u)$, as complex functions of variable $u\in \bC$. To emphasize, we use $z\in \bC$ instead of $u$ to denote the variable.   The theory of analytic combinatorics states that the asymptotic of the coefficients of the Maclaurin series is determined by the local nature of the complex function at its dominant singularities (i.e., the singularities closest to $z=0$). 

To apply the methodology from \citep{flajolet2009analytic}, we introduce some additional definitions. For $R>1$ and $\phi\in (0,\pi/2)$, the \emph{$\Delta$-domain} $\Delta(\phi,R)$ is defined by \[
\Delta(\phi,R) \triangleq \{ z\in \bC \mid |z|<R, z\ne 1, |\arg(z-1)|>\phi \}\,.
\]
For a complex number $\zeta\not=0$, a $\Delta$-domain at $\zeta$ is the image by the mapping $z\mapsto \zeta z$ of $\Delta(\phi,R)$ for some $R>1$ and $\phi\in (0,\pi/2)$. A function is $\Delta$-analytic at $\zeta$ if it is analytic on a $\Delta$-domain at $\zeta$. 

Suppose the function $f(z)$ has only one dominant singularity and  without loss of generality assume that it lies at $z=1$. We then have the following lemma.

\begin{lemma}[{\cite[Corollary VI.1]{flajolet2009analytic}}]\label{lem:single-singularity}
 If $f$ is  $\Delta$-analytic at its dominant singularity $1$ and  \[
f(z)\sim (1-z)^{-\alpha},\quad {\rm as}~ z\to 1, z\in\Delta
\]
with $\alpha\notin \{0,-1,-2,\dots \}$, we have \[
[z^n]f(z) \sim \frac{n^{\alpha-1}}{\Gamma(\alpha)}\,.
\]
\end{lemma}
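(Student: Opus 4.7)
The plan is to use Cauchy's coefficient formula together with a Hankel-type contour deformed into the $\Delta$-domain around the dominant singularity $z=1$, i.e., the classical singularity-analysis transfer-theorem approach. Writing $f(z) = (1-z)^{-\alpha} + g(z)$ with $g(z) = o\!\left((1-z)^{-\alpha}\right)$ uniformly as $z\to 1$ in $\Delta$, I would handle the two pieces separately.

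For the calibration piece, the generalized binomial theorem gives
\[
[z^n](1-z)^{-\alpha} \;=\; \frac{\Gamma(n+\alpha)}{\Gamma(\alpha)\,\Gamma(n+1)},
\]
and Stirling's asymptotic $\Gamma(n+\alpha)/\Gamma(n+1) = n^{\alpha-1}(1+O(1/n))$ immediately yields the target $n^{\alpha-1}/\Gamma(\alpha)$. The exclusion $\alpha \notin \{0,-1,-2,\ldots\}$ is precisely what prevents $1/\Gamma(\alpha)$ from vanishing. An alternative derivation, which anticipates the error analysis below, is to integrate $(1-z)^{-\alpha} z^{-(n+1)}$ on a small Hankel contour around $z=1$, substitute $z = 1 + t/n$, and recover Hankel's representation $\tfrac{1}{\Gamma(\alpha)} = \tfrac{1}{2\pi\i}\int_{\cC_0} e^{t} t^{-\alpha}\,dt$.

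For the error piece, starting from Cauchy's formula on a circle $|z| = \rho < 1$, I would deform into a contour $\cC = \cC_{\textnormal{inner}} \cup \cC_{\textnormal{rays}} \cup \cC_{\textnormal{outer}}$ inside $\Delta$: a circle of radius $1/n$ around $z=1$, two rectilinear rays emanating from it at angles $\pm\phi'$ with $\phi < \phi' < \pi/2$, and a closing arc of radius $r\in(1,R)$. The deformation is legitimate because $g/z^{n+1}$ is holomorphic on $\Delta\setminus\{0\}$. On $\cC_{\textnormal{outer}}$, $|z|\ge r > 1$ forces $|z|^{-(n+1)} = O(r^{-n})$, exponentially smaller than $n^{\alpha-1}$, while $g$ is bounded there. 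On $\cC_{\textnormal{inner}}$ and $\cC_{\textnormal{rays}}$, the substitution $z = 1 + t/n$ rescales $z^{-(n+1)}\to e^{-t}$ on compacta and $|1-z|^{-\alpha}$ to $n^{\alpha}|t|^{-\alpha}$; the $o(\cdot)$ hypothesis on $g$ then yields, by dominated convergence against the Hankel kernel, $[z^n]g(z) = o(n^{\alpha-1})$, which combined with the calibration step gives the conclusion.

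The main obstacle will be the uniform upgrade of the pointwise little-$o$ on $g$ to an estimate suitable for dominated convergence: one must choose an $\eps$-neighborhood of $z=1$ inside $\Delta$ on which $|g(z)| \le \eps\,|1-z|^{-\Re\alpha}$, and then verify that the rescaled integrand admits an integrable majorant on the full contour, uniformly in $n$, so that the $\eps$ survives the limit and produces an $\eps \cdot C \cdot n^{\alpha-1}$ bound. A secondary technical point is reconciling the aperture $\phi$ of the $\Delta$-domain with the opening $\phi'$ of the Hankel rays so that $\cC_{\textnormal{rays}}$ remains strictly inside $\Delta$; the constraint $\phi < \pi/2$ in the definition of $\Delta(\phi,R)$ is exactly what makes such a $\phi'$ available.
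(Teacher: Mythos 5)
The paper offers no proof of this lemma---it is quoted directly as Corollary~VI.1 of Flajolet--Sedgewick---and your argument is precisely the standard singularity-analysis proof from that source: split $f(z)=(1-z)^{-\alpha}+g(z)$ with $g(z)=o\left((1-z)^{-\alpha}\right)$, get the main term from $[z^n](1-z)^{-\alpha}=\binom{n+\alpha-1}{n}\sim n^{\alpha-1}/\Gamma(\alpha)$, and transfer the little-$o$ via the Hankel-type contour (circle of radius $1/n$ about $1$, rays of opening $\phi'\in(\phi,\pi/2)$ inside the $\Delta$-domain, outer arc of radius $r\in(1,R)$). Your sketch is correct and even flags the one genuinely delicate point, namely that the $\eps$-bound $|g(z)|\le \eps\,|1-z|^{-\Re\alpha}$ holds only near $z=1$, so the rays must be split there, the far portions being handled by boundedness of $g$ together with $|z|\ge 1+c$, which makes them exponentially negligible.
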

If the function has multiple dominant singularities, the influence of each singularity is added up (See \cite[Theorem VI.5]{flajolet2009analytic} for more details). Careful singularity analysis then gives
\[
[z^n]\klap(z)\sim C_1 n^{-3/2}, \quad [z^n]N_k(z)\leq C_2 n^{-3/2}\,,
\]
for some positive constants $C_1,C_2>0$. We refer to \cref{sec:AsympMac} and \cref{sec:NTK2} for more detailed steps. They are indeed of the same order of decay rate $n^{-3/2}$, which implies that such $\gamma$ exists. This shows $\cH_{N_k}(\bS^{d-1})\subseteq \hlap(\bS^{d-1})$.

\subsection{$\Delta$-Analyticity of Neural Tangent Kernels}

We present the $\Delta$-analyticity of the NTKs here. In light of \eqref{eq:simplified-recursion}, the NTKs $N_k$ are compositions of arc-cosine kernels $\kappa_0$ and $\g$. 
We analytically extend $\kappa_0$ and $\kappa_1$ to a complex function of a complex variable $z\in \bC$. Both complex functions $\arccos(z)$ and $\sqrt{1-z^2}$ have branch points at $z=\pm 1$. Therefore, the branch cut of $\kappa_0(z)$ and $\kappa_1(z)$ is $[1,\infty)\cup (-\infty,-1]$. They have a single-valued analytic branch on \begin{equation}\label{eq:branch}
D = \bC\setminus [1,\infty) \setminus (-\infty,-1]\,.
\end{equation}
On this branch, we have \begin{align*}
\kappa_0(z) ={} & \frac{ \pi +\i\log(z+\i\sqrt{1-z^2})}{\pi}\,,\\
\g(z) ={} & \frac{1}{\pi}\left[ z\cdot \left( \pi +\i\log(z+\i\sqrt{1-z^2} \right) + \sqrt{1-z^2} \right]\,, 
\end{align*}
where we use the principal value of the logarithm and square root. We then show the dominant singularities of $\g^{(k)}(z)$ are $\pm 1$ and that $\g^{(k)}(z)$ is $\Delta$-analytic at $\pm 1$ for any $k\geq1$. We further have the following theorem on the $\Delta$-singularity for $N_k$. 

\begin{theorem}[Proof in \cref{sec:NTK1}]\label{thm:analytic_main}
  For each $k\geq 1$, the dominant singularities of $N_k$ are $\pm 1$. There exists $R_k>1$ such that $N_k$ is analytic on $\{z\in \bC~|~ |z|\le R_k\}\cap D$, where $D = \bC\setminus [1,\infty) \setminus (-\infty,-1] $.  
\end{theorem}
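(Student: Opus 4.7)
The plan is a strong induction on $k$ using the recursion
\[N_k(z)=\g^{(k)}(z)+N_{k-1}(z)\,\kappa_0\bigl(\g^{(k-1)}(z)\bigr)+\beta^2\]
from \eqref{eq:simplified-recursion}. Since finite sums and products of functions analytic on a common region $\{|z|\le R\}\cap D$ remain analytic there, the theorem reduces to showing (i) each iterate $\g^{(k)}$ is analytic on some $\{|z|\le R_k'\}\cap D$ with $R_k'>1$, and (ii) the composition $\kappa_0\circ \g^{(k-1)}$ inherits the same region. Part (ii) will follow from (i) together with the explicit form of $\kappa_0$, which is analytic on $D$ with branch points only at $\pm 1$, provided the image of the domain under $\g^{(k-1)}$ avoids the cuts $[1,\infty)\cup(-\infty,-1]$.

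The base case $k=1$ of (i) follows from the closed form $\g(z)=\tfrac{1}{\pi}\bigl[z(\pi+\i\log(z+\i\sqrt{1-z^2}))+\sqrt{1-z^2}\bigr]$, which with principal branches is analytic on all of $D$. For the inductive step, assume $\g^{(k-1)}$ is analytic on $\{|z|\le R_{k-1}'\}\cap D$; I would verify analyticity of $\g\circ \g^{(k-1)}$ by splitting into three regimes. Near $z=1$: differentiating gives $\g'(z)=\kappa_0(z)$ with $\kappa_0(1)=1$, and $\arccos(1-t)=\sqrt{2t}+O(t^{3/2})$ yields the local expansion
\[\g(z)=z+\tfrac{2\sqrt{2}}{3\pi}(1-z)^{3/2}+O\bigl((1-z)^{5/2}\bigr)\quad\text{as }z\to 1\text{ in }D,\]
with the principal branch of $(1-z)^{3/2}$. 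Since the correction is of higher order than $z-1$, $\arg(\g(z)-1)$ differs from $\arg(z-1)$ by $o(1)$, so a sufficiently narrow $\Delta$-domain at $1$ is carried into one still contained in $D$. Near $z=-1$: because $\g(-1)=0$ lies in the interior of $D$, continuity sends a small neighborhood of $-1$ in $D$ into a compact subset of $D$. On the ``bulk'' $\{|z|\le R_{k-1}'\}\cap D$ bounded away from $\pm 1$: the image under $\g^{(k-1)}$ is a compact subset of the open set $D$, so a slight reduction $R_k'<R_{k-1}'$ keeps it inside $D$ after one more application of $\g$. Patching the three regimes yields (i) for $k$, and (ii) follows.

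The main obstacle is ensuring that $\pm 1$ remain genuine singularities of $N_k$ and are not cancelled in the finite combination \eqref{eq:simplified-recursion}. I would track the leading singular behavior explicitly: at $z=1$, the factor $\kappa_0(\g^{(k-1)}(z))$ contributes a $-\sqrt{2(1-z)}/\pi$ singular term, using $\kappa_0(w)-1\sim -\sqrt{2(1-w)}/\pi$ as $w\to 1$ together with $1-\g^{(k-1)}(z)\sim 1-z$ (because $1$ is a fixed point of $\g$ with multiplier $\g'(1)=1$); meanwhile $\g^{(k)}(z)=1+O(1-z)$ is regular to this order, and by the recursion $N_{k-1}(1)=k(1+\beta^2)>0$, so the $\sqrt{1-z}$ term of $N_k(z)$ has strictly positive weight and cannot vanish. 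A parallel argument at $z=-1$ uses the $\sqrt{1+z}$ singularity contributed by the innermost $\kappa_0$ or $\g$ evaluation and the positivity of the surrounding factors. Together these establish that $\pm 1$ are genuine singular points, and combined with (i)--(ii) yield analyticity of $N_k$ on $\{|z|\le R_k\}\cap D$ for $R_k=\min_{j\le k} R_j'$.
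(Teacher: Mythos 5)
Your reduction via the recursion and your handling of neighborhoods of $\pm 1$ follow the same broad lines as the paper, but there is a genuine gap exactly where the paper has to work hardest: the ``bulk'' regime. You assert that the image of $\{|z|\le R_{k-1}'\}\cap D$ (away from $\pm1$) under $\g^{(k-1)}$ is ``a compact subset of the open set $D$,'' but that containment is the whole point, not a given. The set $\{|z|\le R\}\cap D$ with neighborhoods of $\pm1$ removed is not even compact (its closure meets the cuts $[1,\infty)\cup(-\infty,-1]$), and nothing in your argument rules out that $\g$, or some iterate, sends a point of the enlarged slit disk to a real number of modulus at least $1$, i.e.\ onto the branch cut of the next application of $\g$ or of $\kappa_0$, which would destroy analyticity of the composition. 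The paper devotes \cref{lem:oct,lem:around-one,lem:unit-disk,lem:analytic} to precisely this issue: nonnegativity of the Maclaurin coefficients of $\g$ gives $|\g(z)|\le \g(|z|)\le 1$ on the closed unit disk with equality only at $z=1$, strengthened to $|\g(z)|\le 1-\delta_\theta$ when $|\arg z|\ge \theta$, and a continuity/compactness step then pushes the admissible radius slightly beyond $1$ while keeping the image inside the open unit disk (hence inside $D$); near $z=1$ the half-plane preservation of \cref{lem:oct} keeps the image off $[1,\infty)$ (your $\arg(\g(z)-1)=\arg(z-1)+o(1)$ argument from the $(1-z)^{3/2}$ expansion is an acceptable substitute for that last piece). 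Note also that the radius obtained this way is only slightly larger than $1$; your induction tacitly treats $R_{k-1}'$ as if it could be arbitrary, which is not available and is exactly why the image could otherwise hit the cuts.

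A smaller problem is your argument that $z=-1$ is a genuine singularity of $N_k$: you invoke ``positivity of the surrounding factors'' to conclude the $\sqrt{1+z}$ coefficient is nonzero, but that coefficient is $\frac{\sqrt{2}(\beta^2-1)}{\pi}\prod_{j=1}^{k-1}\kappa_0(\g^{(j)}(-1))$, which vanishes when $\beta=1$; in that case the leading singular term at $-1$ is of order $(1+z)^{3/2}$ (the paper points this out right after \eqref{eq:N(-1)}), so the non-cancellation argument must go to higher order. In the paper these singular expansions at $\pm1$ are established in the proof of \cref{thm:Kk}, while the proof of \cref{thm:analytic_main} itself is concerned only with analyticity on $\{|z|\le R_k\}\cap D$.
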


\subsection{Asymptotic Rates of Maclaurin Coefficients for $N_k$}\label{sec:AsympMac}
The following theorem demonstrates the asymptotic rates of Maclaurin coefficients for $N_k$. 

\begin{theorem}[Proof in \cref{sec:NTK2}]\label{thm:Kk}
The $n$-th order coefficient of the Maclaurin series of the $(k+1)$-layer NTK in \eqref{eq:recursive} satisfies $[z^n]N_k(z) = O(n^{-3/2})$.
\end{theorem}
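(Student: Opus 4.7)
The plan is to apply the transfer theorem of singularity analysis \cite{flajolet2009analytic} at the dominant singularities of $N_k$, which by \cref{thm:analytic_main} are $\pm 1$. Because \cref{thm:analytic_main} also guarantees $\Delta$-analyticity of $N_k$ at both points and analyticity on a disc of radius $R_k>1$, the asymptotic of $[z^n]N_k(z)$ is controlled entirely by the local expansions of $N_k$ at $\pm 1$, with the two contributions summed via the multi-singularity version of \cref{lem:single-singularity} (see \cite[Theorem VI.5]{flajolet2009analytic}). Concretely, if the singular part of $N_k$ at each of $\pm 1$ is of the form $\mathrm{const}\cdot(1\mp z)^{\alpha}+o((1\mp z)^{\alpha})$ with $\alpha\notin\{0,1,2,\ldots\}$, then the transfer theorem gives $[z^n]N_k(z)=O(n^{-\alpha-1})$, and it suffices to show $\alpha\ge 1/2$ at both dominant singularities.

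First I would record the local expansions of the building blocks. Writing $w=1-z$, the Puiseux expansions $\arccos(z)=\sqrt{2w}\bigl(1+\tfrac{w}{12}+O(w^2)\bigr)$ and $\sqrt{1-z^2}=\sqrt{2w}\bigl(1-\tfrac{w}{4}+O(w^2)\bigr)$, combined with the branch formulas above for $\kappa_0$ and $\g$, yield
\begin{equation*}
\kappa_0(z)=1-\tfrac{\sqrt{2}}{\pi}\,w^{1/2}+O(w^{3/2}),\qquad
\g(z)=(1-w)+\tfrac{2\sqrt{2}}{3\pi}\,w^{3/2}+O(w^{5/2}),
\end{equation*}
and analogously in $v=1+z$ at $z=-1$, $\kappa_0(z)=\tfrac{\sqrt{2}}{\pi}v^{1/2}+O(v^{3/2})$ and $\g(z)=\tfrac{2\sqrt{2}}{3\pi}v^{3/2}+O(v^{5/2})$. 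Hence locally at $\pm 1$ each of $\kappa_0,\g$ admits a Puiseux expansion in non-negative half-integer powers of $w$ (resp.\ $v$); I would track each function by its \emph{smallest non-integer exponent}, which is $1/2$ for $\kappa_0$ and $3/2$ for $\g$.

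Next I would induct on $k$ to show that the smallest non-integer exponent of $N_k$'s local expansion at both $\pm 1$ is at least $1/2$. At $z=1$ a short sub-induction yields $\g^{(k)}(z)-z=O(w^{3/2})$, so the bare $\g^{(k)}$ term in \eqref{eq:simplified-recursion} only contributes a sub-leading singularity; substituting the expansion of $\g^{(k-1)}$ into $\kappa_0$ preserves the leading $-\tfrac{\sqrt{2}}{\pi}w^{1/2}$ term. The product $N_{k-1}(z)\cdot\kappa_0(\g^{(k-1)}(z))$ then involves only the non-integer cross-terms $(\text{analytic})\cdot w^{1/2}$ and $w^{1/2}\cdot w^{1/2}=w$ (the latter is integer, hence analytic), so its smallest non-integer exponent remains $1/2$; adding the analytic $\beta^2$ and the sub-leading $\g^{(k)}$ preserves this. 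A symmetric argument works at $z=-1$, with one wrinkle: for $k\ge 2$ the inner iterate satisfies $\g^{(k-1)}(-1)\in[0,1)$ (since $\g(-1)=0$ and $\g$ maps $[-1,1]$ into $[0,1]$), where $\kappa_0$ is analytic, so $\kappa_0\circ\g^{(k-1)}$ contributes only an $O(v^{3/2})$ singularity at $-1$; whereas for $k=1$ the composition degenerates to $\kappa_0(z)$ itself with singular part $\tfrac{\sqrt{2}}{\pi}v^{1/2}$. Either way, multiplying by $N_{k-1}$ keeps the smallest non-integer exponent of the product at $1/2$.

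With expansions $N_k(z)=P_k^{\pm}(z)+C_k^{\pm}(1\mp z)^{1/2}+o((1\mp z)^{1/2})$ at $z=\pm 1$ established (with $P_k^{\pm}$ analytic), the multi-singularity transfer theorem yields
\begin{equation*}
[z^n]N_k(z)=\frac{C_k^{+}+(-1)^nC_k^{-}}{\Gamma(-1/2)}\,n^{-3/2}+o(n^{-3/2})=O(n^{-3/2}),
\end{equation*}
as claimed. The main obstacle is the induction above: one must verify that no singularity stronger than $w^{1/2}$ (such as a logarithmic factor or a negative exponent) can arise from the composition $\kappa_0\circ\g^{(k-1)}$ or from the product with $N_{k-1}$, and that all the intermediate local expansions stay of pure half-integer Puiseux type so that the transfer theorem applies directly. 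This is ultimately a bookkeeping exercise given the half-integer structure of $\kappa_0$ and $\g$, but it has to be carried out simultaneously at both dominant singularities and uniformly over the iteration.
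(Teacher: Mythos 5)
Your proposal is correct and follows essentially the same route as the paper: local Puiseux expansions of $\kappa_0$ and $\g$ at $\pm 1$, an induction along the recursion \eqref{eq:simplified-recursion} showing the singular part of $N_k$ is a $\sqrt{1\mp z}$ term (using $\g^{(k)}(z)=z+O((1-z)^{3/2})$ at $1$ and $\g^{(k)}(z)=a_k+O((1+z)^{3/2})$ at $-1$, with $\kappa_0$ analytic at $a_{k-1}\in[0,1)$), and then the two-singularity transfer theorem. The ``bookkeeping'' you defer is exactly what the paper carries out explicitly via the functions $p_k,q_k$ in \cref{sec:NTK2} and \cref{lem:expansion-one,lem:expansion-minus-one}, which also yields the precise constants in \eqref{eq:coeff1} and \eqref{eq:coeff-ne1}.
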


In the proof of \cref{thm:Kk}, we show the following asymptotics \begin{align}
    N_k(z) ={}& (k+1)(z+\beta^2) -\left(\sqrt{2}(1+\beta^2)\frac{k(k+1)}{2\pi} + o(1)\right) \sqrt{1-z} \quad \textnormal{as~} z\to 1\,,\label{eq:N(1)}\\
    N_k(z) ={}& N_k(-1) + \left(\frac{\sqrt{2}(\beta^2-1)}{\pi} \prod_{j=1}^{k-1} \kappa_0(\g^{j}(-1)) + o(1)\right)\sqrt{1+z} \quad \textnormal{as~} z\to -1\,.\label{eq:N(-1)}
\end{align}

When $\beta=1$, the singularity at $z=-1$ will not provide a $\sqrt{1+z}$ term. The dominating term in \eqref{eq:N(-1)} is a higher power of $\sqrt{1+z}$. As a result, the contribution of the singularity at $-1$ to the Maclaurin coefficients is $o(n^{-3/2})$ and dominated by the contribution of the singularity at $1$. The singularity at $z=1$ provides a $\sqrt{1-z}$ term and thus contributes to $O(n^{-3/2})$ decay rate of $[z^n]N_k(z)$. In addition, from \eqref{eq:N(1)}, we deduce \begin{equation}\label{eq:coeff1}
  \frac{[z^n]N_k(z)}{n^{-3/2}} \sim -\frac{ 2\sqrt{2} k (k+1)}{(2 \pi ) \Gamma \left(-\frac{1}{2}\right)}
  = \frac{k (k+1)}{\sqrt{2} \pi ^{3/2}}\,.
\end{equation}

When $\beta\ne 1$, both singularities $\pm 1$ contribute $\Theta(n^{-3/2})$ to the Maclaurin cofficients. The contribution of $z=1$ is \begin{equation*}
-\frac{\sqrt{2} (1+\beta^2) k (k+1)}{2 \pi  \Gamma \left(-\frac{1}{2}\right)} n^{-3/2} =
\frac{\left(\beta ^2+1\right) k (k+1)}{2 \sqrt{2} \pi ^{3/2}} n^{-3/2}
\,.
\end{equation*}
The contribution of $z=-1$ is \[
\left(\frac{\sqrt{2}(\beta^2-1)}{\pi\Gamma(-1/2)} \prod_{j=1}^{k-1} \kappa_0(\g^{j}(-1))\right) n^{-3/2}
= \left(\frac{1-\beta^2}{\sqrt{2} \pi ^{3/2}}\prod_{j=1}^{k-1} \kappa_0(\g^{j}(-1)) \right)n^{-3/2}\,.
\]
Combining them gives \begin{equation}\label{eq:coeff-ne1}
\frac{[z^n]N_k(z)}{n^{-3/2}} \sim \frac{(\beta^2+1)k (k+1)}{2 \sqrt{2} \pi ^{3/2}} + (-1)^n\frac{1-\beta^2}{\sqrt{2} \pi ^{3/2}}\prod_{j=1}^{k-1} \kappa_0(\g^{j}(-1))\,.
\end{equation}

Based on \cref{thm:Kk}, we are ready to prove \cref{thm:main}. 

\begin{proof}
Let $\klap(z) = e^{-c\sqrt{1-z}}$, where $c>0$ is an arbitrary constant. We have $\cH_{\klap} = \hlap$.  The complex function $\klap$ is analytic on $\bC\setminus [1,\infty)$. As $z\to 1$, we have \[
\frac{\klap(z)-1}{-c} = \sqrt{1-z} + o(\sqrt{1-z}) \sim \sqrt{1-z} \,.
\]
By \cref{lem:single-singularity},
we obtain \begin{equation}\label{eq:asymp-klap}
[z^n]\klap(z) \sim \frac{c}{2\sqrt{\pi}} n^{-3/2}\,.
\end{equation}
Note that $[z^n]N_k(z) = O(n^{-3/2})$ from \cref{thm:Kk}. Therefore, there exists $\gamma>0$ such that $\gamma^2\cdot [z^n]\klap(z) - [z^n]N_k(z) > 0 $ for all $n\ge 0$. This further implies $\gamma^2 \klap(x^\top y) - N_k(x^\top y) $ 
is a positive definite kernel. According to 
 \cref{lem:inclusion}, we have $\cH_{N_k}(\bS^{d-1}) \subseteq \hlap(\bS^{d-1})$.
 Note that, due to \cite[Theorem 3]{geifman2020similarity}, we also have $\hlap(\bS^{d-1}) \subseteq \cH_{N_k}(\bS^{d-1})$.
 Therefore, for any $k\geq 1$, $\hlap(\bS^{d-1}) = \cH_{N_k}(\bS^{d-1})$.

\end{proof}

\section{Results on Exponential Power Kernel}
This section presents the proof of \cref{thm:exp-rkhs}. 
We first show part (\ref{it:exp-s}) below by singularity analysis. 

\begin{proof}[Proof of part (\ref{it:exp-s}) of \cref{thm:exp-rkhs}]
Recall that
the exponential power kernel restricted to the unit sphere with $\gamma>0$ and $\sigma>0$ is given by $
\kexp^{\gamma,\sigma}(x,y) = \exp\left(-\frac{\|x-y\|^\gamma}{\sigma}\right) = \exp\left(-\frac{(2(1-x^\top y))^{\gamma/2}}{\sigma}\right)
$.
Let us study the decay rate of the Maclaurin coefficients of $\kexp^{\gamma,\sigma}(z) \triangleq e^{-c(1-z)^{\gamma/2}}$, where $c = 2^{\gamma /2} / \sigma$. The dominant singularity lies at $z=1$. As $z\to 1$, we get \[
\kexp^{\gamma,\sigma}(z) = 1 - (c+o(1))(1-z)^{\gamma/2} \,.
\]
Applying \cref{lem:single-singularity} gives $
[z^n]\kexp^{\gamma,\sigma}(z) \sim \frac{cn^{-\gamma/2-1}}{-\Gamma(-\gamma/2)}
$.
Therefore, a smaller $\gamma$ results in a larger RKHS.
\end{proof}

Part (\ref{it:exp-r}) of \cref{thm:exp-rkhs} requires more technical preparation. Recall that $\la$ and $\la^{-1}$ denote the Laplace transform and inverse Laplace transform, respectively. We explicitly calculate the inverse Laplace transform $\la^{-1}\{\exp(-s^a)\}(t)$ using Bromwich contour integral and get the following lemma.

\begin{lemma}[Proof in \cref{sec:proof-series}]\label{lem:series}
For $a\in (0,1)$, $f(t)\triangleq \la^{-1}\{\exp(-s^a)\}(t)$ exists. Moreover, $f(t)$ is continuous in $-\infty < t< \infty$ and satisfies $f(0)=0$. If $t>0$, we have \begin{equation}\label{eq:series-repr}
f(t) = \frac{1}{\pi}\sum_{k=0}^\infty  \frac{(-1)^{k+1}\Gamma(ak+1)\sin(\pi ak)}{k! t^{ak+1}}\,.
\end{equation}
\end{lemma}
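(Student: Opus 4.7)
The plan is to compute $f(t)=\la^{-1}\{\exp(-s^a)\}(t)$ directly from the Bromwich inversion formula
\[
f(t) \;=\; \frac{1}{2\pi\i}\int_{\gamma-\i\infty}^{\gamma+\i\infty} e^{st-s^a}\,ds\qquad(\gamma>0),
\]
and then to deform the contour around the branch cut of $s^a$, which I place along the negative real axis $(-\infty,0]$. As a preliminary step I would check that the Bromwich integral converges absolutely: on the line $s=\gamma+\i y$ one has $\arg s\to \pm\pi/2$ as $|y|\to\infty$, so $|e^{-s^a}|=\exp(-|s|^a\cos(a\arg s))$ decays like $\exp(-|y|^a\cos(a\pi/2))$, which is integrable because $0<a<1$ forces $\cos(a\pi/2)>0$.

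For $t>0$ I would close the contour to the left with a Hankel-type keyhole that hugs the branch cut, and verify three ingredients. (i) The large circular arcs of radius $R$ contribute $o(1)$ as $R\to\infty$: for $\arg s$ bounded away from $\pm\pi/2$, $|e^{st}|=e^{Rt\cos\theta}$ decays exponentially in $R$ and dominates the bound $|e^{-s^a}|\le \exp(R^a)$, while a Jordan-type estimate handles the transitional region near $\arg s=\pm\pi/2$. (ii) The small circle of radius $\rho$ around the origin contributes $O(\rho)$ as $\rho\to 0$, because $|e^{-s^a}|$ is uniformly bounded for $|s|\le 1$ when $0<a<1$ (since $-\cos(a\theta)\le |\cos(a\pi)|<1$). (iii) Parameterizing the lower and upper edges of the cut by $s=re^{-\i\pi}$ and $s=re^{\i\pi}$ gives $s^a=r^ae^{\mp\i\pi a}$, and matching orientations with the Bromwich line (both traversed in the upward $y$-direction) delivers the compact representation
\[
f(t) \;=\; \frac{1}{2\pi\i}\int_0^\infty e^{-rt}\bigl[e^{-r^a e^{-\i\pi a}} - e^{-r^a e^{\i\pi a}}\bigr]\,dr
\;=\; \frac{1}{\pi}\int_0^\infty e^{-rt}\,e^{-r^a\cos(\pi a)}\sin\!\bigl(r^a\sin(\pi a)\bigr)\,dr\,,
\]
valid for every $t>0$.

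To extract the stated series, I would expand $e^{-r^a e^{\pm\i\pi a}}=\sum_{k\ge 0}\frac{(-1)^k r^{ak}e^{\pm\i\pi ak}}{k!}$, subtract the two branches to obtain the factor $-2\i\sum_{k\ge 0}\frac{(-1)^k r^{ak}\sin(\pi ak)}{k!}$, and integrate term by term against $e^{-rt}$ using $\int_0^\infty e^{-rt}r^{ak}\,dr=\Gamma(ak+1)/t^{ak+1}$. Stirling's formula shows $\Gamma(ak+1)/k!$ decays super-exponentially in $k$ for $a<1$ (roughly like $k^{-(1-a)k}$), which secures convergence of the resulting series for every $t>0$. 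For the remaining assertions, closing the Bromwich contour to the right when $t<0$ shows $f(t)=0$ there, and the initial value theorem applied to $F(s)=e^{-s^a}$ yields $f(0^+)=\lim_{s\to\infty} s\,e^{-s^a}=0$ (since $e^{-s^a}$ decays faster than any negative power of $s$); together these establish continuity of $f$ on all of $\bR$ with $f(0)=0$.

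The main obstacle I anticipate is justifying the termwise integration above. Each individual integral $\int_0^\infty e^{-rt}r^{ak}\,dr$ converges, but when $a>\tfrac12$ the factor $e^{-r^a\cos(\pi a)}$ actually \emph{grows} in $r$, so no single integrable majorant controls all partial sums at once. The cleanest remedy is to truncate the Maclaurin expansion of $e^{-r^a e^{\i\pi a}}$ at order $N$, bound the remainder uniformly by $r^{aN}e^{r^a}/N!$, apply Fubini to the finite truncation, and then let $N\to\infty$, using that for every fixed $t>0$ the remainder integral decays super-exponentially in $N$ thanks to the $r^{-rt}$ factor dominating over any bounded region $[0,R]$ and the super-exponential decay of $r^{aN}/N!$ in the tail.
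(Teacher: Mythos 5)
Your proposal follows essentially the same route as the paper's proof: deform the Bromwich contour into a keyhole around the branch cut $(-\infty,0]$, show the large and small arcs vanish, expand $e^{-s^a}$ on the two edges of the cut, and integrate term by term against $e^{-rt}$ to obtain the series. If anything, your truncation argument for the termwise integration is more careful than the paper's one-line appeal to dominated convergence, and your split of the outer arc into a neighborhood of $\arg s=\pm\pi/2$ (where $\cos(a\arg s)>0$ makes $|e^{-s^a}|$ decay) versus the rest (where $e^{Rt\cos\theta}$ decays exponentially and beats $e^{R^a}$ since $a<1$) is a workable substitute for the paper's convexity argument in the case $\tfrac12\le a<1$.
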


Based on the series representation \eqref{eq:series-repr}, we then analyze the asymptotic rate for $f(t)$ when $a$ is rational. Note that if $a\in (0,1)$, we have  $-\frac{1}{\Gamma(-a)}>0$. 
\begin{lemma}[Proof in \cref{sec:proof-tail}]\label{lem:tail}
Let $f(t)$ be as defined in \cref{lem:series}.
For $a = \frac{p}{q} \in (0,1)$ ($p$ and $q$ are co-prime), we have $ f(t) \sim -\frac{1}{t^{a+1}\Gamma(-a)}$ as $t\to +\infty$.
\end{lemma}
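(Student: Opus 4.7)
The plan is to split the series from \cref{lem:series} into its $k=1$ term and the tail $\sum_{k\ge 2}$, identify the $k=1$ term with the claimed leading constant $-1/(t^{a+1}\Gamma(-a))$, and then bound the tail uniformly by $O(t^{-2a-1})$, which is $o(t^{-a-1})$ since $a>0$.

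First, note that the $k=0$ term vanishes because $\sin 0 = 0$. For the $k=1$ term, I would apply the reflection formula $\Gamma(z)\Gamma(1-z) = \pi/\sin(\pi z)$ at $z=a$ together with $\Gamma(1-a) = -a\,\Gamma(-a)$ and $\Gamma(1+a) = a\,\Gamma(a)$ to derive the identity $\Gamma(1+a)\sin(\pi a) = -\pi/\Gamma(-a)$. Substituting this into the expression $\frac{1}{\pi}\cdot\frac{\Gamma(a+1)\sin(\pi a)}{t^{a+1}}$ produces exactly $-1/(t^{a+1}\Gamma(-a))$, which is the asserted asymptotic.

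Next, for the tail I would estimate each summand in absolute value using $|\sin(\pi a k)|\le 1$. For $t\ge 1$ and $k\ge 2$, one has $t^{-ak-1}\le t^{-2a-1}$, which allows me to pull out the worst $t$-power and bound
\[
\left|\sum_{k=2}^\infty \frac{(-1)^{k+1}\Gamma(ak+1)\sin(\pi a k)}{\pi\,k!\,t^{ak+1}}\right| \;\le\; \frac{C}{t^{2a+1}}\,,\qquad C \triangleq \sum_{k=2}^\infty \frac{\Gamma(ak+1)}{\pi\,k!}\,.
\]
The finiteness of $C$ follows from the ratio test: by Stirling, $\Gamma(a(k+1)+1)/\Gamma(ak+1) \sim (ak)^a$ as $k\to\infty$, so the ratio of consecutive summands behaves like $a^{a}k^{a-1}/(k+1)\to 0$ because $a<1$. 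Combining the $k=1$ term with this tail bound yields $f(t) = -1/(t^{a+1}\Gamma(-a)) + O(t^{-2a-1})$, and since $-2a-1 < -a-1$ the remainder is $o(t^{-a-1})$, giving exactly the claimed asymptotic.

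The only delicate points are the reflection-formula bookkeeping to pin down the sign and the Stirling estimate establishing convergence of $C$; neither presents a genuine obstacle. I would remark that the argument does not actually use the rationality of $a$; the hypothesis $a=p/q$ in the statement appears to be imposed for compatibility with the subsequent use of \cref{lem:tail} when proving part~(\ref{it:exp-r}) of \cref{thm:exp-rkhs}, where the positive-definiteness machinery requires $a$ rational.
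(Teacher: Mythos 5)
Your proof is correct, but it takes a genuinely different and in fact simpler route than the paper. The paper first applies Euler's reflection formula to rewrite the series from \cref{lem:series} as $\sum_{k\ge 0}\frac{(-1)^k}{k!\,t^{ak+1}\Gamma(-ak)}$, then uses the rationality $a=p/q$ to group terms by the residue of $k$ modulo $q$, recognizes each group as a power series in $t^{-p}$ with infinite radius of convergence, and reads off the leading term $j=1$. You instead isolate the $k=1$ term directly (your reflection-formula computation $\Gamma(1+a)\sin(\pi a)=-\pi/\Gamma(-a)$ is right, and the $k=0$ term indeed vanishes) and bound the tail $\sum_{k\ge 2}$ uniformly by $C\,t^{-2a-1}$ with $C=\sum_{k\ge 2}\Gamma(ak+1)/(\pi k!)$, whose finiteness follows from your ratio-test/Stirling estimate $\Gamma(a(k+1)+1)/\Gamma(ak+1)\sim (ak)^a$ with $a<1$. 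This checks out and even yields the quantitative remainder $f(t)=-1/(t^{a+1}\Gamma(-a))+O(t^{-2a-1})$. Your closing remark deserves one correction, though: the rationality hypothesis in part~(\ref{it:exp-r}) of \cref{thm:exp-rkhs} is not an independent requirement of the positive-definiteness machinery (Schoenberg and Bernstein--Widder need no such assumption); it enters the paper \emph{only} through \cref{lem:tail} and \cref{cor:tail}. Since your argument dispenses with rationality entirely, it would actually let one drop the rationality restriction from part~(\ref{it:exp-r}) as well. What the paper's residue-class decomposition buys in exchange is the full asymptotic expansion $f(t)\sim\sum_{j=1}^{q-1}\frac{(-1)^j}{t^{aj+1}j!\,\Gamma(-aj)}$, which your single-term extraction does not immediately provide (though it could be iterated to recover it).
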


Thus, We have the following corollary for general exponential power kernel.

\begin{cor}\label{cor:tail}
For $a = \frac{p}{q} \in (0,1)$ ($p$ and $q$ are co-prime) and $\sigma>0$, 
$\la^{-1}\{\exp(-s^a/\sigma)\}(t)$ is  continuous in $t\in \bR$ and satisfies $\la^{-1}\{\exp(-s^a/\sigma)\}(0)=0$. Moreover, 
$\la^{-1}\{\exp(-s^a/\sigma)\}(t) \sim C t^{-a-1}$ as $t\to +\infty$, for some constant $C>0$.
\end{cor}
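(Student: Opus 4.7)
The plan is to reduce \cref{cor:tail} to \cref{lem:series} and \cref{lem:tail} by a simple rescaling of the Laplace transform variable. Observe that if we set $\beta \triangleq \sigma^{1/a}$, then $(s/\beta)^a = s^a/\sigma$, so
\[
\exp\!\left(-\frac{s^a}{\sigma}\right) = F\!\left(\frac{s}{\beta}\right), \qquad F(s) \triangleq e^{-s^a}.
\]
First I would invoke the standard scaling identity for the Laplace transform: if $F(s) = \la\{f\}(s)$, then $\la^{-1}\{F(s/\beta)\}(t) = \beta f(\beta t)$. Applying this with $f(t) = \la^{-1}\{e^{-s^a}\}(t)$, which exists by \cref{lem:series}, yields
\[
\la^{-1}\!\left\{\exp\!\left(-\frac{s^a}{\sigma}\right)\right\}(t) = \sigma^{1/a}\, f\!\left(\sigma^{1/a} t\right).
\]

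Given this identity, the continuity claim and the boundary value $\la^{-1}\{\exp(-s^a/\sigma)\}(0)=0$ follow immediately from the corresponding properties of $f$ recorded in \cref{lem:series}, since $t\mapsto \sigma^{1/a} f(\sigma^{1/a} t)$ is a continuous affine reparametrization that preserves the point $t=0$. For the tail asymptotics, I would substitute the conclusion of \cref{lem:tail}, namely $f(\tau) \sim -\frac{1}{\tau^{a+1}\Gamma(-a)}$ as $\tau\to+\infty$, with $\tau=\sigma^{1/a}t$:
\[
\sigma^{1/a} f(\sigma^{1/a} t) \sim \sigma^{1/a}\cdot \frac{-1}{(\sigma^{1/a}t)^{a+1}\,\Gamma(-a)} = -\frac{1}{\sigma\,\Gamma(-a)}\cdot t^{-a-1},
\]
so the claim holds with $C = -\frac{1}{\sigma\,\Gamma(-a)}$.

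Finally I would verify $C>0$. For $a\in(0,1)$ we have $\Gamma(-a) = \Gamma(1-a)/(-a)$, and since $\Gamma(1-a)>0$ while $-a<0$, we conclude $\Gamma(-a)<0$, hence $C>0$.

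There is no real obstacle here: the only thing to be careful about is that the rescaling identity $\la^{-1}\{F(s/\beta)\}(t) = \beta f(\beta t)$ is applied in a regime where $F(s/\beta)$ is indeed a valid Laplace transform on a suitable right half-plane, which is ensured because $\exp(-s^a/\sigma)$ has the same domain of definition as $\exp(-s^a)$ up to the analytic continuation used in \cref{lem:series}. Everything else is a direct substitution.
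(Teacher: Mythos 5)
Your proposal is correct and follows essentially the same route as the paper: both reduce \cref{cor:tail} to \cref{lem:series} and \cref{lem:tail} via the Laplace-transform scaling identity $\la^{-1}\{F(cs)\}(t)=\tfrac{1}{c}f(t/c)$ (you simply write it with $\beta=\sigma^{1/a}$). Your explicit constant $C=-\tfrac{1}{\sigma\,\Gamma(-a)}$ and the check that $\Gamma(-a)<0$ for $a\in(0,1)$ are consistent with the paper's remark that $-\tfrac{1}{\Gamma(-a)}>0$.
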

\begin{proof}
Use the property $\la^{-1} \{ F(cs) \}(t) = \frac{1}{c} f\left(\frac{t}{c}\right)$, where $c>0$ and $F(s) = \la\{f(t)\}(s) $.
\end{proof}

Before completing the proof for part (\ref{it:exp-r}), we need two additional lemmas from the classical approximation theory. Recall that a function $f(t)$ is \emph{completely monotone} if it is continuous on $[0,\infty)$, infinitely differentiable on $(0,\infty)$ and satisfies $(-1)^n\frac{d^n f(t)}{dt}\ge 0$ for every $n=0,1,2,\dots$ and $t>0$ \citep[Chapter~14]{cheney2009course}. 
\begin{lemma}[Schoenberg interpolation theorem {\cite[Theorem 1 of Chapter 15]{cheney2009course}}]\label{lem:schoenberg}
If $f$ is completely monotone but not constant on $[0,\infty)$, 
then for any $n$ distinct points $x_1,x_2,\dots,x_n$ in any inner-product space, the matrix $A_{ij} = f(\|x_i-x_j\|^2)$ is positive definite. 
\end{lemma}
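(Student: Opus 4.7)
The plan is to derive this from the Bernstein--Hausdorff--Widder theorem, which characterizes completely monotone functions on $[0,\infty)$ as Laplace transforms of positive Borel measures: there exists a positive Borel measure $\mu$ on $[0,\infty)$ with $f(t) = \int_0^\infty e^{-st}\,d\mu(s)$ for every $t \geq 0$. Because $f$ is not constant, $\mu$ cannot be supported solely at $\{0\}$ (otherwise $f\equiv\mu(\{0\})$), so $\mu\bigl((0,\infty)\bigr) > 0$. Substituting $t = \|x_i - x_j\|^2$, the matrix of interest decomposes as
\[
A_{ij} \;=\; \int_0^\infty e^{-s\|x_i - x_j\|^2}\, d\mu(s),
\]
expressing $A$ as a $\mu$-weighted integral of the Gaussian kernel matrices $G^{(s)}_{ij} := e^{-s\|x_i-x_j\|^2}$.

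The next step is to show that for each fixed $s>0$ the matrix $G^{(s)}$ is \emph{strictly} positive definite on the distinct points $x_1,\dots,x_n$. Any finite subset of an inner-product space embeds isometrically into some $\bR^m$ via a Cholesky factorization of the Gram matrix $\{\langle x_i,x_j\rangle\}_{i,j}$, so one may assume $x_i\in\bR^m$ without loss of generality. On $\bR^m$, the function $x\mapsto e^{-s\|x\|^2}$ has a strictly positive Fourier transform (another Gaussian), so by Bochner's theorem the translation-invariant kernel $(x,y)\mapsto e^{-s\|x-y\|^2}$ is positive definite; strict positive definiteness on distinct points follows from the linear independence of the characters $\{e^{\i\xi^\top x_k}\}_{k=1}^n$ in $L^2$ of the (everywhere positive) Gaussian spectral density.

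Combining the two ingredients yields the conclusion. For any non-zero $c\in\bR^n$, the map $s\mapsto c^\top G^{(s)} c$ is continuous, non-negative on $[0,\infty)$, and strictly positive on $(0,\infty)$, so
\[
c^\top A\, c \;=\; \int_{[0,\infty)} c^\top G^{(s)} c\; d\mu(s) \;\geq\; \int_{(0,\infty)} c^\top G^{(s)} c\; d\mu(s) \;>\; 0,
\]
using $\mu\bigl((0,\infty)\bigr)>0$ in the final inequality. Hence $A$ is positive definite.

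The main obstacle I anticipate is the strict positive definiteness of the Gaussian kernel on an arbitrary finite set of distinct points in an abstract inner-product space; one must carefully reduce to a finite-dimensional Euclidean setting before applying Bochner. The Bernstein--Hausdorff--Widder representation and the reduction of a non-constant $f$ to a measure $\mu$ with $\mu\bigl((0,\infty)\bigr)>0$ are standard by comparison.
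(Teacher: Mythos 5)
Your proof is correct. The paper does not actually prove this lemma---it is imported verbatim from the cited reference (Cheney and Light, Chapter 15, Theorem 1)---and your argument is essentially the standard proof given there: represent $f$ via Bernstein--Widder as a Laplace transform of a finite positive measure with mass on $(0,\infty)$, reduce to the strict positive definiteness of the Gaussian kernel on finitely many distinct points (after embedding them isometrically in a finite-dimensional Euclidean space), and establish the latter by the everywhere-positivity of the Gaussian spectral density together with the linear independence of the characters.
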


\begin{lemma}[Bernstein-Widder {\cite[Theorem 1 of Chapter 14]{cheney2009course}}]\label{lem:bernstein}
A function $f:[0,\infty)\to [0,\infty)$ is completely monotone if and only if there is a nondecreasing bounded function $g$ such that $f(t) = \int_0^\infty e^{-st} dg(s)$. 
\end{lemma}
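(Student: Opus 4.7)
I would establish the two implications separately.

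The sufficiency direction is routine. Given $f(t) = \int_0^\infty e^{-st}\,dg(s)$ with $g$ nondecreasing and bounded, continuity of $f$ on $[0,\infty)$ follows from dominated convergence. For $t$ in any compact subset of $(0,\infty)$, the function $s\mapsto s^n e^{-st}$ is uniformly bounded in $s\in[0,\infty)$ and hence integrable against the finite measure $dg$; differentiating under the integral $n$ times gives
$$(-1)^n f^{(n)}(t) = \int_0^\infty s^n e^{-st}\,dg(s) \geq 0,$$
establishing complete monotonicity together with the requisite continuity at $0$.

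The necessity direction is the substantive part, and I would follow the classical Widder strategy of approximating $f$ by Laplace transforms of explicit discrete measures and then extracting a weak limit. For each $n\geq 1$, I would build a nonnegative discrete measure $\mu_n$ on $[0,\infty)$ from iterated forward differences of $f$; a natural choice is
$$\mu_n(\{k/n\}) = \frac{n^k}{k!}\,(-1)^k\,\Delta_{1/n}^k f(0), \qquad k\geq 0,$$
with $\Delta_h f(t) := f(t+h)-f(t)$. Complete monotonicity gives $(-1)^k \Delta_h^k f \geq 0$, so $\mu_n\geq 0$; and a telescoping bound yields $\mu_n([0,\infty)) \leq f(0) < \infty$. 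A Bernstein-polynomial argument (the substitution $u=e^{-t/n}$ recasts $\la\{\mu_n\}(t)$ as a Bernstein approximant on $[0,1]$ of a continuous function built from $f$) would then give $\la\{\mu_n\}(t)\to f(t)$ uniformly on compact subsets of $[0,\infty)$.

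Next, I would apply Helly's selection theorem. The distribution functions $g_n(s):=\mu_n([0,s])$ form a uniformly bounded family of nondecreasing functions, so a subsequence $g_{n_j}$ converges pointwise to a nondecreasing bounded $g$ at every continuity point. Tightness of $\{\mu_n\}$ follows from continuity of $f$ at $0$ combined with uniform convergence $\la\{\mu_n\}\to f$: the estimate $(1-e^{-1})\,\mu_n([R,\infty)) \leq \la\{\mu_n\}(0) - \la\{\mu_n\}(1/R)$ forces uniformly small tails as $R\to\infty$. With pointwise convergence and tightness in hand, passage to the limit inside the Laplace integral yields $f(t) = \int_0^\infty e^{-st}\,dg(s)$ for every $t\geq 0$.

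The main obstacle is the convergence $\la\{\mu_n\}\to f$ for the explicit discrete approximants: this is a quantitative finite-difference approximation result that uses complete monotonicity in an essential way, via the classical fact that Bernstein polynomials built from the values $\{f(k/n)\}$ converge uniformly to $f$ on compacts. Once this approximation lemma is in place, the tightness estimate and Helly selection described above deliver the representation with $g$ nondecreasing and bounded by $f(0)$.
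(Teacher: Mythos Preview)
The paper does not prove this lemma; it is quoted verbatim as a classical result from \cite[Theorem~1 of Chapter~14]{cheney2009course} and used as a black box in the proof of part~(\ref{it:exp-r}) of \cref{thm:exp-rkhs}. So there is no ``paper's own proof'' to compare against.

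Your sketch follows the classical Widder strategy and is sound in outline: the sufficiency direction is straightforward differentiation under the integral, and the necessity direction via discrete approximating measures, Helly selection, and tightness is the standard route. One point to watch is the precise form of your approximating measure $\mu_n$: the weights $\tfrac{n^k}{k!}(-1)^k\Delta_{1/n}^k f(0)$ and the substitution $u=e^{-t/n}$ do not literally produce a Bernstein polynomial $\sum_k \binom{n}{k}u^k(1-u)^{n-k}f(k/n)$, so the convergence $\la\{\mu_n\}\to f$ would need a separate argument (or you should instead use the truncated binomial weights that do yield Bernstein polynomials after the change of variable, as in Feller's or Widder's treatments). You correctly flag this convergence step as the crux, but the specific discretization you wrote would need adjustment to make the Bernstein-polynomial reduction go through cleanly.
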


Now we are ready to prove part (\ref{it:exp-r}).

\begin{proof}[Proof of part (\ref{it:exp-r}) of \cref{thm:exp-rkhs}]
By \cref{lem:schoenberg} and \cref{lem:inclusion}, we need to show that 
\begin{equation}\label{eq:csquared-diff}
    c^2 \exp(-x^{\gamma_1/2}/\sigma_1) - \exp(-x^{\gamma_2/2}/\sigma_2)
\end{equation}
 is completely monotone but not constant on $[0,\infty)$ for some $c>0$. By \cref{lem:bernstein}, it suffices to check that \eqref{eq:csquared-diff} is the Laplace transform of a non-negative function on $[0,\infty)$. By \cref{cor:tail}, for rational $\gamma_1,\gamma_2\in (0,1]$, there exists $c>0$ such that \[
 c^2 \la^{-1}\{\exp(-x^{\gamma_1/2}/\sigma_1)\} - \la^{-1}\{\exp(-x^{\gamma_2/2}/\sigma_2)\}
 \]
 is continuous and positive on $[0,\infty)$, which completes the proof.
\end{proof}

\section{Numerical Results}
\begin{figure}[htb]
    \centering
    \begin{subfigure}[b]{.46\linewidth}
    \includegraphics[width=\linewidth]{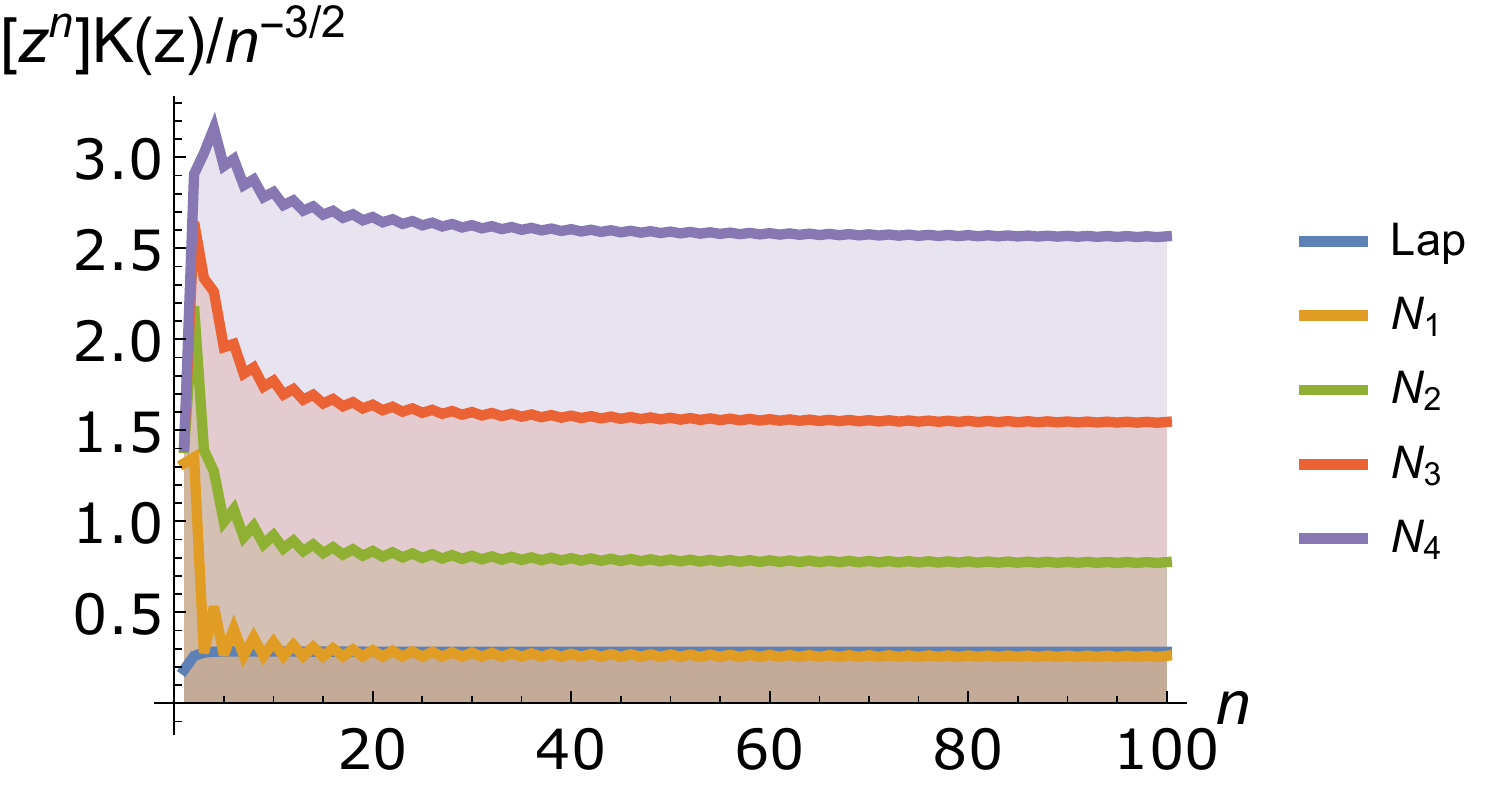}
    \caption{$\beta=1$}
    \end{subfigure}
    \begin{subfigure}[b]{.46\linewidth}
    \includegraphics[width=\linewidth]{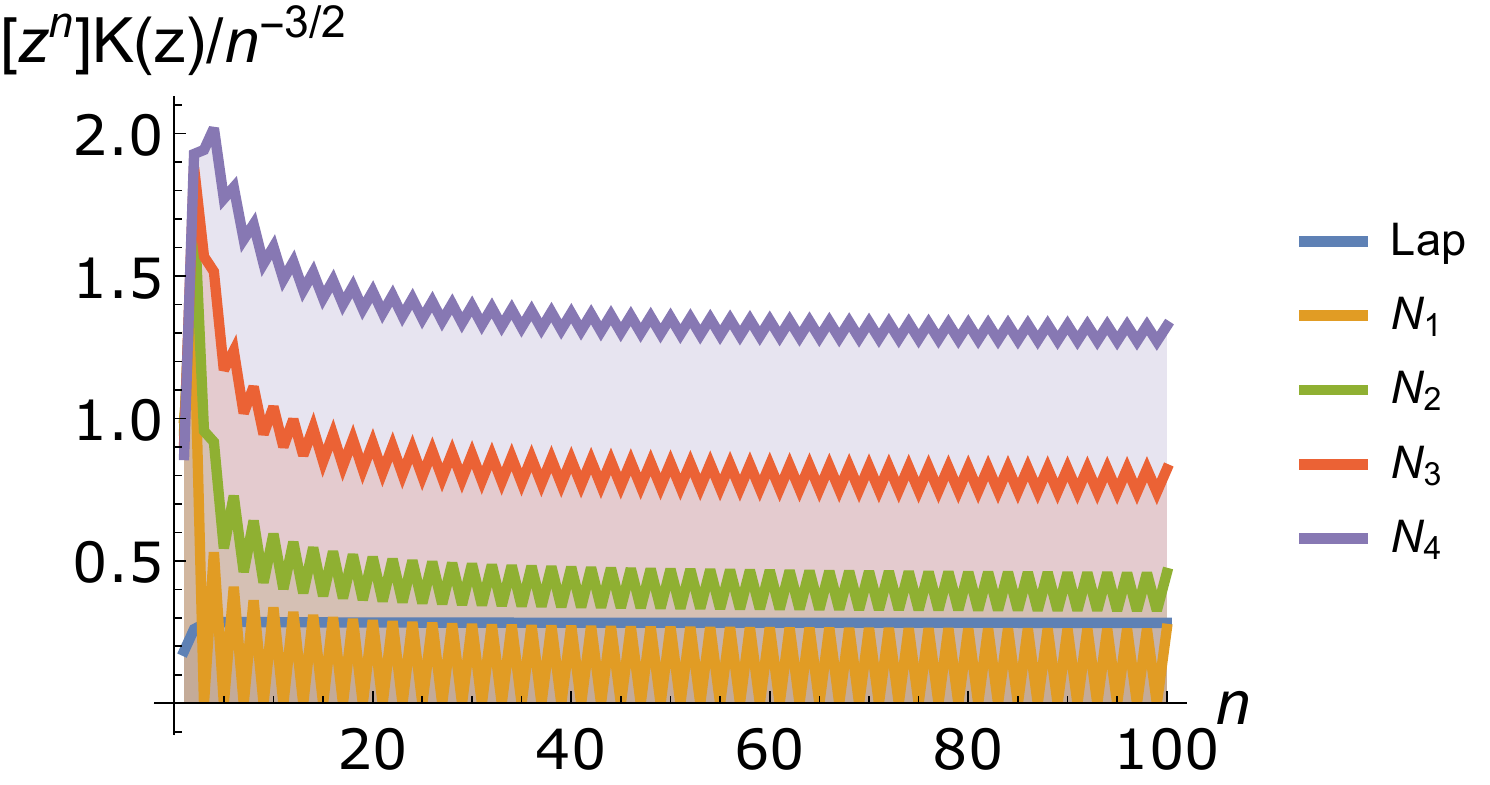}
    \caption{$\beta=0$}
    \end{subfigure}
    \caption{We plot $[z^n]K(z)/n^{-3/2}$ versus $n$ for the Laplace kernel $\klap(u) = e^{-\sqrt{2(1-u)}}$ and NTKs $N_1,\dots,N_4$ with $\beta=0,1$. }
    \label{fig:coeff}
\end{figure}

\begin{table}[htb]
    \centering
    \begin{tabular}{c|l|l|l|l}
    \toprule
        \textbf{Kernel}  & $\frac{[z^{100}]K(z)}{100^{-3/2}}$ & \textbf{Theory} & $\frac{[z^{100}]K(z)}{100^{-3/2}}$ & \textbf{Theory} \\
       $K$ & ($\beta=1$) & ($\beta=1$)  & ($\beta = 0$) & ($\beta=0$)
        \\ \midrule
        $\klap$ & 0.28244 & $\frac{1}{2 \sqrt{\pi }} \approx 0.282095$ & &  \\
        $N_1$ & 0.261069 & $\frac{\sqrt{2}}{\pi ^{3/2}} \approx 0.253975 $ & 0.261069 & $\frac{\sqrt{2}}{\pi ^{3/2}}\approx 0.253975$\\
        $N_2$ & 0.776014 & $\frac{3 \sqrt{2}}{\pi ^{3/2}}\approx 0.761924$ &  0.457426 & $\frac{7}{2 \sqrt{2} \pi ^{3/2}} \approx 0.444455$\\
        $N_3$ & 1.54607 & $\frac{6 \sqrt{2}}{\pi ^{3/2}} \approx 1.52385$ &  0.821694 & $\frac{13 \pi -\arccos\left(\pi^{-1}\right)}{2 \sqrt{2} \pi ^{5/2}} \approx 0.800218$ \\
        $N_4$ & 2.56559 & $\frac{10 \sqrt{2}}{\pi ^{3/2}} \approx 2.53975 $ & 1.32472 & Equation~\eqref{eq:cell} $\approx 1.29531$
        \\
        \bottomrule
    \end{tabular}
    \caption{We report the numerical values of $ \frac{[z^{100}]K(z)}{100^{-3/2}}$ for the Laplace kernel $\klap(u) = e^{-\sqrt{2(1-u)}}$ and NTKs $N_1,\dots,N_4$ with $\beta=0,1$. These numerical values are the final values of the curves in \cref{fig:coeff}.   We present the theoretical prediction by the asymptotic of $[z^n]K(z)/n^{-3/2}$ alongside each numerical value. The choice of $\beta$ does not apply to the Laplace kernel. Therefore, we only show the results of the Laplace kernel in the columns for $\beta=1$ and leave blank the columns for $\beta=0$. }
    \label{tab:limit}
\end{table}

We verify the asymptotics of the Maclaurin coefficients of the Laplace kernel and NTKs through numerical results. 

\cref{fig:coeff} plots $\frac{[z^n]K(z)}{n^{-3/2}}$ versus $n$ for different kernels, including the Laplace kernel $\klap(u) = e^{-\sqrt{2(1-u)}}$ and NTKs $N_1,\dots,N_4$ with $\beta=0,1$. All curves converge to a constant as $n\to \infty$, which indicates that for every kernel $K(z)$ considered here, we have $[z^n]K(z) = \Theta(n^{-3/2})$. The numerical results agree with our theory in the proofs of \cref{thm:Kk} and \cref{thm:main}. 

Now we investigate the value of $[z^n]K(z)/n^{-3/2}$. \cref{tab:limit} reports $[z^{100}]K(z)/100^{-3/2}$ for the Laplace kernel and NTKs with $\beta=0,1$. These numerical values are the final values of the curves in \cref{fig:coeff}. The theoretical predictions are obtained through the asymptotic of $[z^n]K(z)/n^{-3/2}$, which we shall explain below. The theoretical prediction of $[z^{100}]N_4(z)/100^{-3/2}$ with $\beta=0$ is presented below due to the space limit in the table  \begin{equation}\label{eq:cell}
    \frac{20+\pi ^{-2}\left(\pi -\arccos\left(\pi^{-1}\right)\right) \left(\pi -\arccos\left(\frac{\sqrt{\pi ^2-1}+\pi -\arccos \left(\pi^{-1}\right)}{\pi ^2}\right)\right)}{2 \sqrt{2} \pi ^{3/2}} \approx 1.29531\,.
\end{equation}
We observe that the theoretical prediction by the asymptotic is close to the corresponding numerical value. There are two possible reasons that account for the minor discrepancy between them. First, the theoretical prediction reflects the situation for an infinitely large $n$ (so that the lower order terms become negligible), while $n=100$ is clearly finite. Second, the numerical results for the Maclaurin series are obtained by numerical Taylor expansion and therefore numerical errors could be present. 

In what follows, we explain how to obtain the theoretical predictions. First, \eqref{eq:asymp-klap} gives 
\ifarxiv
\[
[z^n]\klap(z)/n^{-3/2} \sim \frac{1}{2\sqrt{\pi}}\,.
\]
\else
$[z^n]\klap(z)/n^{-3/2} \sim \frac{1}{2\sqrt{\pi}}$.
\fi
As a result, the theoretical prediction for $[z^{100}]\klap(z)/100^{-3/2}$ is $\frac{1}{2\sqrt{\pi}}$. 
Now we explain the thereotical predictions for NTKs. 
When $\beta=1$, the theoretical prediction is given by \eqref{eq:coeff1}. 
We present it in the third column of \cref{tab:limit} for $N_1,\dots,N_4$. 
When $\beta=0$, we plug $\beta = 0$ into \eqref{eq:coeff-ne1} and obtain
\ifarxiv
\[
\frac{[z^n]N_k(z)}{n^{-3/2}} \sim \frac{k (k+1)}{2 \sqrt{2} \pi ^{3/2}} + \frac{(-1)^n}{\sqrt{2} \pi ^{3/2}}\prod_{j=1}^{k-1} \kappa_0(\g^{j}(-1))
\,.\]
\else
$
\frac{[z^n]N_k(z)}{n^{-3/2}} \sim \frac{k (k+1)}{2 \sqrt{2} \pi ^{3/2}} + \frac{(-1)^n}{\sqrt{2} \pi ^{3/2}}\prod_{j=1}^{k-1} \kappa_0(\g^{j}(-1))
$.
\fi
The above expression (when $n=100$ on the right-hand side) is the theoretical value presented in the fifth column of \cref{tab:limit} for NTKs. 

\section{Discussion}
Our result provides further evidence that the NTK is similar to the existing Laplace kernel. However, the following mysteries remain open. 
First, if we still restrict them to the unit sphere, do they have a similar learning dynamic when we perform kernelized gradient descent? 
Second, what is the behavior of the NTK and the Laplace kernel outside of $\bS^{d-1}$ and in the entire space $\bR^{d}$? Do they still share similarities in terms of the associated RKHS? If not, how far do they deviate from each other and is the difference significant? 
Third, this work along with \citep{bietti2019inductive,geifman2020similarity} focuses on the NTK with ReLU activation. It would be interesting to explore the influence of different activations upon the RKHS and other kernel-related quantities. We would like to remark that the ReLU NTK has a clean expression partly because the expectation over the Gaussian process in the general NTK can be computed exactly if the activation function is ReLU (which may not be true for other non-linearities, for example, it may require more work for sigmoid). 
Fourth, we showed that highly non-smooth exponential power kernels have an even larger RKHS than the NTK. It would be worthwhile comparing the performance of these non-smooth kernels and deep neural networks through more extensive experiments in a variety of machine learning tasks. 

Moreover, we show that a less smooth exponential power kernel leads to a larger RKHS and therefore greater expressive power. Its generalization capability is a related but different topic. Analyzing the generalization error requires more efforts in general. Researchers often use the RKHS norm to provide an upper bound for it. We will study its generalization in future work. 

\subsubsection*{Acknowledgements}
We gratefully acknowledge the support of the Simons Institute for the Theory of Computing.
We thank Peter Bartlett, Mikhail Belkin, Jason D.~Lee, 
and Iosif Pinelis for helpful discussions and thank Mikhail Belkin and Alexandre Eremenko for introducing to us the works \citep{hui2019kernel,liu2020toward} and \citep{flajolet2009analytic}, respectively.

\newpage
\ifarxiv
\bibliographystyle{abbrvnat}
\else
\bibliographystyle{iclr2021_conference}
\fi
\bibliography{reference-list}

\begin{thebibliography}{31}
\providecommand{\natexlab}[1]{#1}
\providecommand{\url}[1]{\texttt{#1}}
\expandafter\ifx\csname urlstyle\endcsname\relax
  \providecommand{\doi}[1]{doi: #1}\else
  \providecommand{\doi}{doi: \begingroup \urlstyle{rm}\Url}\fi

\bibitem[Allen-Zhu et~al.(2019)Allen-Zhu, Li, and Song]{allen2019convergence}
Z.~Allen-Zhu, Y.~Li, and Z.~Song.
\newblock A convergence theory for deep learning via over-parameterization.
\newblock In \emph{International Conference on Machine Learning}, pages
  242--252, 2019.

\bibitem[Aronszajn(1950)]{aronszajn1950theory}
N.~Aronszajn.
\newblock Theory of reproducing kernels.
\newblock \emph{Transactions of the American mathematical society}, 68\penalty0
  (3):\penalty0 337--404, 1950.

\bibitem[Arora et~al.(2019{\natexlab{a}})Arora, Du, Hu, Li, Salakhutdinov, and
  Wang]{arora2019exact}
S.~Arora, S.~S. Du, W.~Hu, Z.~Li, R.~R. Salakhutdinov, and R.~Wang.
\newblock On exact computation with an infinitely wide neural net.
\newblock In \emph{Advances in Neural Information Processing Systems}, pages
  8141--8150, 2019{\natexlab{a}}.

\bibitem[Arora et~al.(2019{\natexlab{b}})Arora, Du, Hu, Li, and
  Wang]{arora2019fine}
S.~Arora, S.~S. Du, W.~Hu, Z.~Li, and R.~Wang.
\newblock Fine-grained analysis of optimization and generalization for
  overparameterized two-layer neural networks.
\newblock In \emph{36th International Conference on Machine Learning, ICML
  2019}, pages 477--502. International Machine Learning Society (IMLS),
  2019{\natexlab{b}}.

\bibitem[Atkinson and Han(2012)]{atkinson2012spherical}
K.~Atkinson and W.~Han.
\newblock \emph{Spherical harmonics and approximations on the unit sphere: an
  introduction}, volume 2044.
\newblock Springer Science \& Business Media, 2012.

\bibitem[Bach(2017)]{bach2017breaking}
F.~Bach.
\newblock Breaking the curse of dimensionality with convex neural networks.
\newblock \emph{The Journal of Machine Learning Research}, 18\penalty0
  (1):\penalty0 629--681, 2017.

\bibitem[Belkin et~al.(2018)Belkin, Ma, and Mandal]{belkin2018understand}
M.~Belkin, S.~Ma, and S.~Mandal.
\newblock To understand deep learning we need to understand kernel learning.
\newblock In \emph{International Conference on Machine Learning}, pages
  541--549, 2018.

\bibitem[Bietti and Bach(2021)]{bietti2020deep}
A.~Bietti and F.~Bach.
\newblock Deep equals shallow for relu networks in kernel regimes.
\newblock In \emph{ICLR}, 2021.

\bibitem[Bietti and Mairal(2019)]{bietti2019inductive}
A.~Bietti and J.~Mairal.
\newblock On the inductive bias of neural tangent kernels.
\newblock In \emph{Advances in Neural Information Processing Systems}, pages
  12893--12904, 2019.

\bibitem[Bingham(1973)]{bingham1973positive}
N.~H. Bingham.
\newblock Positive definite functions on spheres.
\newblock In \emph{Mathematical Proceedings of the Cambridge Philosophical
  Society}, volume~73, pages 145--156. Cambridge University Press, 1973.

\bibitem[Cao and Gu(2019)]{cao2019generalization}
Y.~Cao and Q.~Gu.
\newblock Generalization bounds of stochastic gradient descent for wide and
  deep neural networks.
\newblock In \emph{Advances in Neural Information Processing Systems}, pages
  10836--10846, 2019.

\bibitem[Cao et~al.(2019)Cao, Fang, Wu, Zhou, and Gu]{cao2019towards}
Y.~Cao, Z.~Fang, Y.~Wu, D.-X. Zhou, and Q.~Gu.
\newblock Towards understanding the spectral bias of deep learning.
\newblock \emph{arXiv preprint arXiv:1912.01198}, 2019.

\bibitem[Cheney and Light(2009)]{cheney2009course}
E.~W. Cheney and W.~A. Light.
\newblock \emph{A course in approximation theory}, volume 101.
\newblock American Mathematical Soc., 2009.

\bibitem[Cho and Saul(2009)]{cho2009kernel}
Y.~Cho and L.~K. Saul.
\newblock Kernel methods for deep learning.
\newblock In \emph{Advances in neural information processing systems}, pages
  342--350, 2009.

\bibitem[Doetsch(1974)]{doetsch1974introduction}
G.~Doetsch.
\newblock \emph{Introduction to the theory and application of the Laplace
  transformation}.
\newblock Springer, 1974.

\bibitem[Du et~al.(2019{\natexlab{a}})Du, Lee, Li, Wang, and
  Zhai]{du2019gradient}
S.~Du, J.~Lee, H.~Li, L.~Wang, and X.~Zhai.
\newblock Gradient descent finds global minima of deep neural networks.
\newblock In \emph{International Conference on Machine Learning}, pages
  1675--1685, 2019{\natexlab{a}}.

\bibitem[Du et~al.(2019{\natexlab{b}})Du, Zhai, Poczos, and
  Singh]{du2018gradient}
S.~S. Du, X.~Zhai, B.~Poczos, and A.~Singh.
\newblock Gradient descent provably optimizes over-parameterized neural
  networks.
\newblock In \emph{International Conference on Learning Representations},
  2019{\natexlab{b}}.

\bibitem[Fan and Wang(2020)]{fan2020spectra}
Z.~Fan and Z.~Wang.
\newblock Spectra of the conjugate kernel and neural tangent kernel for
  linear-width neural networks.
\newblock \emph{arXiv preprint arXiv:2005.11879}, 2020.

\bibitem[Flajolet and Sedgewick(2009)]{flajolet2009analytic}
P.~Flajolet and R.~Sedgewick.
\newblock \emph{Analytic combinatorics}.
\newblock cambridge University press, 2009.

\bibitem[Geifman et~al.(2020)Geifman, Yadav, Kasten, Galun, Jacobs, and
  Basri]{geifman2020similarity}
A.~Geifman, A.~Yadav, Y.~Kasten, M.~Galun, D.~Jacobs, and R.~Basri.
\newblock On the similarity between the laplace and neural tangent kernels.
\newblock \emph{arXiv preprint arXiv:2007.01580}, 2020.

\bibitem[Hui et~al.(2019)Hui, Ma, and Belkin]{hui2019kernel}
L.~Hui, S.~Ma, and M.~Belkin.
\newblock Kernel machines beat deep neural networks on mask-based
  single-channel speech enhancement.
\newblock \emph{Proc. Interspeech 2019}, pages 2748--2752, 2019.

\bibitem[Jacot et~al.(2018)Jacot, Gabriel, and Hongler]{jacot2018neural}
A.~Jacot, F.~Gabriel, and C.~Hongler.
\newblock Neural tangent kernel: Convergence and generalization in neural
  networks.
\newblock In \emph{Advances in neural information processing systems}, pages
  8571--8580, 2018.

\bibitem[Kuditipudi et~al.(2019)Kuditipudi, Wang, Lee, Zhang, Li, Hu, Ge, and
  Arora]{kuditipudi2019explaining}
R.~Kuditipudi, X.~Wang, H.~Lee, Y.~Zhang, Z.~Li, W.~Hu, R.~Ge, and S.~Arora.
\newblock Explaining landscape connectivity of low-cost solutions for
  multilayer nets.
\newblock In \emph{Advances in Neural Information Processing Systems}, pages
  14601--14610, 2019.

\bibitem[Liu et~al.(2020)Liu, Zhu, and Belkin]{liu2020toward}
C.~Liu, L.~Zhu, and M.~Belkin.
\newblock Toward a theory of optimization for over-parameterized systems of
  non-linear equations: the lessons of deep learning.
\newblock \emph{arXiv preprint arXiv:2003.00307}, 2020.

\bibitem[Minh et~al.(2006)Minh, Niyogi, and Yao]{minh2006mercer}
H.~Q. Minh, P.~Niyogi, and Y.~Yao.
\newblock Mercer’s theorem, feature maps, and smoothing.
\newblock In \emph{International Conference on Computational Learning Theory},
  pages 154--168. Springer, 2006.

\bibitem[Pinelis(2020)]{pinelis}
I.~Pinelis.
\newblock Analyzing the decay rate of taylor series coefficients when
  high-order derivatives are intractable.
\newblock MathOverflow, 2020.
\newblock URL \url{https://mathoverflow.net/q/366252}.

\bibitem[Saitoh and Sawano(2016)]{saitoh2016theory}
S.~Saitoh and Y.~Sawano.
\newblock \emph{Theory of reproducing kernels and applications}.
\newblock Springer, 2016.

\bibitem[Schoenberg(1942)]{schoenberg1942positive}
I.~J. Schoenberg.
\newblock Positive definite functions on spheres.
\newblock \emph{Duke Mathematical Journal}, 9\penalty0 (1):\penalty0 96--108,
  1942.

\bibitem[Spiegel(1965)]{spiegel1965laplace}
M.~R. Spiegel.
\newblock \emph{Laplace transforms}.
\newblock McGraw-Hill New York, 1965.

\bibitem[Yang and Salman(2019)]{yang2019fine}
G.~Yang and H.~Salman.
\newblock A fine-grained spectral perspective on neural networks.
\newblock \emph{arXiv preprint arXiv:1907.10599}, 2019.

\bibitem[Zou et~al.(2020)Zou, Cao, Zhou, and Gu]{zou2020gradient}
D.~Zou, Y.~Cao, D.~Zhou, and Q.~Gu.
\newblock Gradient descent optimizes over-parameterized deep relu networks.
\newblock \emph{Machine Learning}, 109\penalty0 (3):\penalty0 467--492, 2020.

\end{thebibliography}

\newpage
\appendix
\part{Appendices}\label{Appendix}
\parttoc
\section{Proofs for Neural Tangent Kernel}\label{sec:NTK}

\subsection{Proof of \cref{lem:sigma-1}}\label{sec:proof-sigma-1}
\begin{proof}
We show it by induction. It holds when $k=0$ by the initial condition \eqref{eq:init-cond}. Assume that it holds for some $k\ge 0$, i.e., $\Sigma_k(x,x)=1$. Consider $k+1$. We have \[
\Sigma_{k+1}(x,x) = \kappa_1(\Sigma_{k}(x,x)) = \kappa_1(1) = 1\,.
\]
\end{proof}

\subsection{Proof of Equation~\eqref{eq:simplified-recursion}}\label{sec:proof-eq4}
\begin{proof}
We plug $\Sigma_k(x,x)=1$ into \eqref{eq:recursive} and obtain \begin{align*}
    \Sigma_k(x,y) ={}& \kappa_1(\Sigma_{k-1}(x,y)) \\
    N_k(x,y) ={}& \Sigma_k(x,y) + N_{k-1}(x,y)\kappa_0(\Sigma_{k-1}(x,y)) + \beta^2\,.
\end{align*}
Recall $\Sigma_0(x,y) = u$. By induction, we get \[
\Sigma_k(u) = \kappa_1^{(k)}(u)\,,
\]
where $\g^{(k)}(u)\triangleq \g^{(k)}(u) = \underbrace{\g( \g ( \cdots \g( \g}_{k}(u))\cdots ))$ is the $k$-th iterate of $\g(u)$. Then it follows \[
N_k(u) = \g^{(k)}(u) + N_{k-1}(u)\kappa_0(\g^{k-1}(u)) + \beta^2\,.
\]
\end{proof}

\subsection{Proof of \cref{thm:analytic_main}}\label{sec:NTK1}

\cref{lem:expansion-one} and \cref{lem:expansion-minus-one} demonstrate that $\pm 1$ are indeed singularities and analyze the asymptotics for $\kappa_1^{(k)}$ as $z$ tends to $\pm 1$, respectively. Our calculation is inspired by \citet{pinelis}, which only considers $k=2$.

\begin{lemma}\label{lem:expansion-one}
For every $k\ge 1$, there exists $c_k(z)$ such that \[\g^{(k)}(z) = z + c_k(z)(1-z)^{3/2}\,,\]
where
\[\lim_{z\to 1} c_k(z)=\frac{2\sqrt{2}k }{3\pi}\,.\]

\end{lemma}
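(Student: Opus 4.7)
My plan is to prove the expansion by induction on $k$, with the base case $k=1$ being a direct Puiseux-type expansion of $\kappa_1(z)$ around $z=1$. The function $c_k(z)$ is simply defined as $c_k(z) \triangleq (\kappa_1^{(k)}(z) - z)/(1-z)^{3/2}$; the content of the lemma is that this ratio has the claimed limit at $z=1$.

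For the base case, I would substitute the expansions
\begin{align*}
\arccos(z) &= \sqrt{2(1-z)} + \tfrac{1}{6\sqrt{2}}(1-z)^{3/2} + O((1-z)^{5/2}),\\
\sqrt{1-z^2} &= \sqrt{2(1-z)} - \tfrac{\sqrt{2}}{4}(1-z)^{3/2} + O((1-z)^{5/2})
\end{align*}
into $\pi\kappa_1(z) = z(\pi-\arccos z) + \sqrt{1-z^2}$. Writing $z = 1 - (1-z)$ so that $-z\sqrt{2(1-z)} = -\sqrt{2(1-z)} + \sqrt{2}(1-z)^{3/2}$, the two $\sqrt{2(1-z)}$ contributions cancel exactly, leaving $\pi\kappa_1(z) = z\pi + \bigl(\sqrt{2} - \tfrac{z}{6\sqrt{2}} - \tfrac{\sqrt{2}}{4}\bigr)(1-z)^{3/2} + O((1-z)^{5/2})$. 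Evaluating the bracket at $z=1$ gives $\tfrac{3\sqrt{2}}{4} - \tfrac{\sqrt{2}}{12} = \tfrac{2\sqrt{2}}{3}$, so $\kappa_1(z) = z + \tfrac{2\sqrt{2}}{3\pi}(1-z)^{3/2} + O((1-z)^{5/2})$, establishing $c_1(z)\to \tfrac{2\sqrt{2}}{3\pi}$.

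For the inductive step, assume $\kappa_1^{(k)}(z) = z + c_k(z)(1-z)^{3/2}$ with $c_k(z)\to \tfrac{2\sqrt{2}k}{3\pi}$. Set $y \triangleq \kappa_1^{(k)}(z)$, so
\[
1-y \;=\; (1-z)\bigl[1 - c_k(z)(1-z)^{1/2}\bigr], \qquad (1-y)^{3/2} \;=\; (1-z)^{3/2}\bigl(1+o(1)\bigr)
\]
as $z\to 1$ (using the principal branch of the square root, which is unambiguous near $1$). Since $\kappa_1^{(k)}$ is continuous with $\kappa_1^{(k)}(1)=1$, we have $y\to 1$; applying the base case to $\kappa_1(y)$ and substituting gives
\[
\kappa_1^{(k+1)}(z) \;=\; y + \tfrac{2\sqrt{2}}{3\pi}(1-y)^{3/2} + o\bigl((1-y)^{3/2}\bigr) \;=\; z + \Bigl(c_k(z) + \tfrac{2\sqrt{2}}{3\pi}\bigl(1+o(1)\bigr)\Bigr)(1-z)^{3/2},
\]
so $c_{k+1}(z)\to \tfrac{2\sqrt{2}k}{3\pi} + \tfrac{2\sqrt{2}}{3\pi} = \tfrac{2\sqrt{2}(k+1)}{3\pi}$, closing the induction.

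The main subtlety is ensuring the base-case remainder remains $o((1-z)^{3/2})$ after composition. Because $y\to 1$ as $z\to 1$, an $o((1-y)^{3/2})$ error in applying $\kappa_1$ at $y$ automatically translates into an $o((1-z)^{3/2})$ error via $(1-y)^{3/2} \sim (1-z)^{3/2}$, so no additional control is needed. The only mild care required is choosing a single-valued branch of $(1-z)^{1/2}$ and $(1-y)^{1/2}$ compatible with $\Delta$-analyticity at $1$; since both $z$ and $y$ approach $1$ from within the same $\Delta$-domain (which follows from the $\Delta$-analyticity results stated earlier for $\kappa_1^{(k)}$), this is automatic.
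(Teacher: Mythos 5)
Your proposal is correct and follows essentially the same route as the paper: induction on $k$, with the inductive step driven by composing the $k=1$ expansion at $y=\kappa_1^{(k)}(z)$ and using $(1-y)^{3/2}\sim(1-z)^{3/2}$, exactly as in the paper's argument. The only difference is cosmetic: you establish the base case by substituting explicit Puiseux expansions of $\arccos(z)$ and $\sqrt{1-z^2}$ at $z=1$, whereas the paper uses Taylor's theorem with integral remainder along the segment from $1$ to $z$ and bounds the error integral; both give the same constant $\frac{2\sqrt{2}}{3\pi}$, and your handling of the branch/remainder issues in the composition is at the same level of rigor as the paper's.
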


\begin{proof}
We prove by induction on $k$. We first prove the statement for $k=1$. Let $z=1-r e^{\i\theta}$. Taylor's theorem around $1$ with integral form of remainder gives \[
\g(z) = z + \int_{\gamma} \frac{z-w}{\pi  \sqrt{1-w^2}} dw\,,
\]
where $\gamma:[0,1]\to\bC$ is the simple straight line connecting $1$ and $z$ taking the form $\gamma(t)=1-tre^{\i \theta}$. It follows
\begin{align*}
\g(z) =& z + \int_{\gamma} \frac{z-w}{\pi  \sqrt{1-w}}\cdot\frac{1}{\sqrt{1+w}} dw\\
=&z + \int_{\gamma} \frac{z-w}{\pi \sqrt{2} \sqrt{1-w}} dw+ \int_{\gamma} \frac{z-w}{\pi \sqrt{2} \sqrt{1-w}} \cdot(\frac{\sqrt{2}}{\sqrt{1+w}}-1) dw\,.
\end{align*}
Since \[\int_\gamma \frac{z-w}{\sqrt{1-w}} dw = \frac{2}{3} \sqrt{1-w} (w-3 z+2)\bigg\rvert_{w=1}^{w=z} = \frac{4}{3} (1-z)^{3/2} 
\,,
\]
we have
\begin{align*}
    \g(z) 
=z + \frac{2\sqrt{2}}{3\pi}(1-z)^{3/2}+ \int_{\gamma} \frac{z-w}{\pi \sqrt{2} \sqrt{1-w}} \cdot(\frac{\sqrt{2}}{\sqrt{1+w}}-1) dw\,.
\end{align*}
We then turn to show
\begin{align*}
    \lim_{z\to 1} \Big\{(1-z)^{-3/2}\cdot\int_{\gamma} \frac{z-w}{\pi \sqrt{2} \sqrt{1-w}} \cdot(\frac{\sqrt{2}}{\sqrt{1+w}}-1) dw\Big\}=0\,.
\end{align*}
Direct calculation gives 
\begin{align*}
    &\lim_{z\to 1} \Big\{(1-z)^{-3/2}\cdot\int_{\gamma} \frac{z-w}{ \sqrt{1-w}} \cdot(\frac{\sqrt{2}}{\sqrt{1+w}}-1) dw\Big\}\\
    ={}&\lim_{r\to 0}\Big\{(re^{\i\theta})^{-3/2}\cdot\int_0^1 \frac{(1-t)r^2 e^{2\i\theta}}{\sqrt{tre^{\i\theta}}}(\frac{\sqrt{2}}{\sqrt{2-tre^{\i\theta}}}-1)dt\Big\}\\
    ={}&\lim_{r\to 0}\Big\{\int_0^1 \frac{1-t}{\sqrt{t}}(\frac{1}{\sqrt{1-tre^{\i\theta}/2}}-1)dt\Big\}\\
    ={}& 0\,.
\end{align*}
Therefore, there exists $c_1(z)$ such that $\lim_{ z\to 1} c_1 (z) = \frac{2\sqrt{2}}{3\pi} \ne 0$ and \[
\g(z) = z + c_1(z)(1-z)^{3/2}\,.
\]

Next, assume that the desired equation holds for some $k\ge 1$. 
We then have \begin{align*}
    \g^{(k+1)}(z) ={}& \g(\g^{(k)}(z))\\
    ={}& \g(z+c_k(z)(1-z)^{3/2}) \\
    ={}& z + c_k(z)(1-z)^{3/2} + c_1\left(\g^{(k)}(z)\right)\cdot \left(1-z-c_k(z)(1-z)^{3/2}\right)^{3/2}\\
    ={}& z + c_{k+1}(z)(1-z)^{3/2}\,,
\end{align*}
where $c_{k+1}(z) \sim c_k(z) + c_1(k_1^{(k)}(z))$. Recall that when $ z\to 1$, we have $\g^{(k)}(z)\to 1$ as well. Therefore we deduce \[
\lim_{ z\to 1}c_{k+1}(z) = \lim_{ z\to 1} c_k(z) + \lim_{ z\to 1} c_1(k_1^{k}(z)) = \frac{2\sqrt{2}k}{3\pi} \ne 0\,.
\] 
\end{proof}

\begin{lemma}\label{lem:expansion-minus-one}
For every $k\ge 1$, there exist $a_k\in\bR$ and a complex function $b_k(z)$ such that \[\g^{(k)}(z) = a_k + b_k(z)(z+1)^{3/2}\,,\]
where \[a_k=\g^{(k)}(-1) ~~~{\rm and}~~~\lim_{z\to -1}b_k(z) = \frac{2\sqrt{2}}{3\pi} \prod_{j=1}^{k-1}\g'(\g^{(j)}(-1)) >0\,. \] 
\end{lemma}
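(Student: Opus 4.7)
The plan is to proceed by induction on $k$, paralleling the structure of \cref{lem:expansion-one} but at $z=-1$. A key contrast with the expansion at $z=1$ is that $\kappa_1'(-1)=\kappa_0(-1)=0$: since $\arccos(-1)=\pi$, the linear term in the Taylor expansion of $\kappa_1$ at $-1$ vanishes, and the $(z+1)^{3/2}$ term emerges as the true leading correction to $\kappa_1(-1)=0$ with no cancellation needed. This actually makes the base case slightly cleaner than its $z=1$ counterpart.

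For the base case $k=1$, I would use $\kappa_1'=\kappa_0$ together with $\kappa_1(-1)=0$ to write
\[
\kappa_1(z)=\int_\gamma \kappa_0(w)\,dw
\]
along the straight segment $\gamma$ from $-1$ to $z$. From the local expansion $\arccos(1-\delta)=\sqrt{2\delta}\,(1+O(\delta))$ and the identity $\pi-\arccos(-1+\delta)=\arccos(1-\delta)$, one obtains $\kappa_0(w)=\frac{\sqrt{2(w+1)}}{\pi}+O\!\left((w+1)^{3/2}\right)$ near $-1$. The principal piece integrates exactly to $\tfrac{2\sqrt{2}}{3\pi}(z+1)^{3/2}$, and after the substitution $w=-1+t(z+1)$ with $t\in[0,1]$ the remainder integral is seen to be $O((z+1)^{5/2})=o((z+1)^{3/2})$. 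Hence $\kappa_1(z)=b_1(z)(z+1)^{3/2}$ with $\lim_{z\to -1}b_1(z)=\tfrac{2\sqrt{2}}{3\pi}$ and $a_1=\kappa_1(-1)=0$, matching the $k=1$ empty product.

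For the inductive step, assume $\kappa_1^{(k)}(z)=a_k+b_k(z)(z+1)^{3/2}$ with $a_k=\kappa_1^{(k)}(-1)$. The crucial observation is that $a_k\in[0,1)$ for every $k\ge 1$: since $\kappa_1$ is strictly increasing on $[-1,1]$ with $\kappa_1([-1,1])=[0,1]$ and $a_1=0$, induction yields $a_k\in[0,1)$. Hence $\kappa_1$ is analytic at $a_k$, and Taylor expanding around $a_k$ while substituting $w=b_k(z)(z+1)^{3/2}$ gives
\[
\kappa_1^{(k+1)}(z)=\kappa_1(a_k)+\kappa_1'(a_k)\,b_k(z)(z+1)^{3/2}+O\!\left((z+1)^{3}\right),
\]
which is of the required form with $a_{k+1}=\kappa_1(a_k)=\kappa_1^{(k+1)}(-1)$ and $b_{k+1}(z)=\kappa_1'(a_k)b_k(z)+O((z+1)^{3/2})$. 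Taking $z\to -1$ yields the one-step recursion $\lim_{z\to-1}b_{k+1}(z)=\kappa_1'(\kappa_1^{(k)}(-1))\cdot\lim_{z\to-1}b_k(z)$; unwinding this together with the base-case value gives the claimed product. Positivity follows since $\kappa_1'=\kappa_0>0$ on $(-1,1]$ and every $a_j\in[0,1)\subset(-1,1)$.

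I expect the only real technical care to lie in the base case: verifying that the remainder in the approximation of $\kappa_0(w)$ truly contributes $o((z+1)^{3/2})$ \emph{uniformly} as $z\to-1$ within a $\Delta$-domain at $-1$, so that $b_1$ is a bona fide complex function in a neighborhood of $-1$ (not merely a quantity defined via its limit). This is handled by a dominated-convergence argument of the same flavor used in \cref{lem:expansion-one}. Once this is in place, the inductive step is essentially an algebraic composition calculation using analyticity of $\kappa_1$ at the interior points $a_k$.
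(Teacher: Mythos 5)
Your proposal is correct and follows essentially the same route as the paper: induction on $k$, with the base case extracting the $\tfrac{2\sqrt{2}}{3\pi}(z+1)^{3/2}$ term from an integral representation of $\kappa_1$ near $-1$ (the paper uses the second-order Taylor remainder $\int_\gamma \frac{z-w}{\pi\sqrt{1-w^2}}\,dw$, you integrate $\kappa_0$ directly via its $\frac{\sqrt{2(w+1)}}{\pi}$ asymptotics---equivalent in substance), and the inductive step composing with the linear expansion of $\kappa_1$ at the interior point $a_k\in[0,1)$ to get the recursion $\lim b_{k+1}=\kappa_1'(a_k)\lim b_k$. Your observation that $a_{k+1}=\kappa_1(a_k)$ directly is in fact a slightly cleaner bookkeeping than the paper's, and the positivity argument matches.
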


\begin{proof}
We prove by induction on $k$. We first prove the statement for $k=1$. Let $z=-1+re^{\i\theta}$. Taylor's theorem around $-1$ with integral form of remainder gives \[
\g(z) = \int_\gamma \frac{z-w}{\pi\sqrt{1-w^2}}dw\,.
\]
where $\gamma:[0,1]\to\bC$ is the simple straight line connecting $-1$ and $z$ taking the form $\gamma(t)=-1+tre^{\i \theta}$. Similar arguments as in the proof of Lemma \ref{lem:expansion-one} give
\[\g(z)=b_1(z)(z+1)^{3/2}\,,\]
where $\lim_{z\to -1} b_1(z) = \frac{2\sqrt{2}}{3\pi} $.

Next, assume that the desired equation holds for some $k\geq 1$. Define $h_k\triangleq \g^{(k)}(-1)$. Since $\g$ is strictly increasing on $[-1,1]$,  $\g(-1)=0$ and $\g(1)=1$, we have $h_1 = 0$ and $h_k\in (0,1)$ for all $k>1$. Expanding $\g$ around $h_k$ yields
\[
\g(z) = \g(h_k) + p(z)(z-h_k) = h_{k+1} + p(z)(z-h_k)\,, 
\]
where $\lim_{z\to h_k}p(z)  = \g'(h_k)$. 
It follows that \begin{align*}
    \g^{k+1}(z) ={}& \g(a_k+b_k(z)(z+1)^{3/2}) 
    = h_{k+1} + p(\g^{(k)}(z))(a_k+b_k(z)(z+1)^{3/2}-h_k)\\
    ={}& a_{k+1} + b_{k+1}(z)(z+1)^{3/2}\,,
\end{align*}
where $a_{k+1} = h_{k+1} + \g'(h_k)(a_k-h_k)$ and $\lim_{z\to -1}b_{k+1}(z) = \g'(h_k)\lim_{z\to -1}b_k(z) $.
By induction, we can show that $a_k=h_k$ for all $k\ge 1$. 
Since $\g'$ is strictly increasing on $[-1,1]$, $\g'(-1)=0$, and $\g'(1)=1$, we have $\g'(h_k)\ge \g'(0)>0$. As a result, \[\lim_{z\to -1}b_{k+1}(z) = \frac{2\sqrt{2}}{3\pi} \prod_{j=1}^{k}\g'(\g^{(j)}(-1)) >0\,.\]

\end{proof}

In the sequel, we show that $\pm 1$ are the only dominant singularities of $\g^{(k)}$ and $\g^{(k)}$ is $\Delta$-analytic at $\pm 1$ (\cref{lem:analytic}). 

\begin{lemma}\label{lem:oct}
For any $z\in \bC$ with $\arg z \in (0,\pi/4)$, $\g(z)\in \bH^+$. For any $z\in \bC$ with $\arg z \in (-\pi/4,0)$, $\g(z)\in \bH^-$.
\end{lemma}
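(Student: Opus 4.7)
The plan is to fix $\theta\in(0,\pi/4)$ and track the imaginary part of $\g(re^{\i\theta})$ as $r$ increases from $0$ along the ray, and then deduce the lower-half statement via Schwarz reflection. The ray $\{re^{\i\theta}:r>0\}$ lies in $\bH^+\subset D$, so $\g$ is analytic on it. Define $g(r)\triangleq \Im \g(re^{\i\theta})$. At $r=0$ we have $g(0)=\Im\g(0)=\Im(1/\pi)=0$. Using the chain rule together with the identity $\g'(z)=\kappa_0(z)$ (immediate from the closed-form expressions just before the lemma; the derivatives coming from $\sqrt{1-z^2}$ and from the logarithmic term cancel), we obtain $g'(r)=\Im\bigl(e^{\i\theta}\kappa_0(re^{\i\theta})\bigr)$.

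Writing $\kappa_0(re^{\i\theta})=c+d\i$ with $c,d\in\bR$, this becomes $g'(r)=c\sin\theta+d\cos\theta$. Since $\sin\theta,\cos\theta>0$ for $\theta\in(0,\pi/4)$, the whole problem reduces to the following sub-claim: \emph{for every $z\in\bH^+$, both $\Re\kappa_0(z)>0$ and $\Im\kappa_0(z)>0$.} Granting this, $g'(r)>0$ for all $r>0$, and integrating gives $g(r)>0$, so $\g(re^{\i\theta})\in\bH^+$, which is the first half of the lemma.

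I would verify the sub-claim using that $\arccos$ is the inverse on $D$ of $\cos$ restricted to the vertical strip $S=\{w:0<\Re w<\pi\}$. The identity $\cos(u+\i v)=\cos u\cosh v-\i\sin u\sinh v$ shows that, on $S$ (where $\sin u>0$), the imaginary part of $\cos w$ is positive exactly when $\Im w<0$. Thus $\cos$ maps the lower half of $S$ bijectively onto $\bH^+\cap D=\bH^+$, so $\arccos(\bH^+)\subseteq\{w:0<\Re w<\pi,\ \Im w<0\}$. Consequently $\kappa_0(z)=(\pi-\arccos(z))/\pi$ has real part in $(0,1)$ and positive imaginary part whenever $z\in\bH^+$, establishing the sub-claim.

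The second statement follows by Schwarz reflection: $\g$ is real on $(-1,1)$, so $\g(\bar z)=\overline{\g(z)}$ on $D$; for $\arg z\in(-\pi/4,0)$, $\bar z$ lies in the first wedge just analyzed, giving $\g(\bar z)\in\bH^+$ and hence $\g(z)\in\bH^-$. The main obstacle in this plan is the sub-claim on the signs of $\Re\kappa_0$ and $\Im\kappa_0$; once the conformal-map picture of $\arccos$ is in hand, the rest collapses to a one-line monotonicity argument along the ray from the origin.
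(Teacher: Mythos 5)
Your proof is correct, but it takes a genuinely different route from the paper's. You differentiate: using $\g'=\kappa_0$ (the cancellation you cite is real) you reduce the lemma to the mapping property of the principal $\arccos$, namely that it sends $\bH^+$ into the half-strip $\{0<\Re w<\pi,\ \Im w<0\}$, so that $\Re\kappa_0>0$ and $\Im\kappa_0>0$ on $\bH^+$; then $\frac{d}{dr}\Im\g(re^{\i\theta})=\Im\bigl(e^{\i\theta}\kappa_0(re^{\i\theta})\bigr)>0$ and integration along the ray from $\g(0)=1/\pi$ finishes the upper-half statement, with reflection for the lower one. The paper instead works one derivative higher: it writes $\g(z)=\frac{1}{\pi}+\frac{z}{2}+\int_\gamma\frac{z-w}{\pi\sqrt{1-w^2}}\,dw$ by Taylor's theorem with integral remainder along the segment $\gamma$ from $0$ to $z$, and shows the remainder has argument in $(0,\pi)$ by elementary bookkeeping: $\arg(1-t^2e^{2\i\theta})\in(-\pi,0)$ gives $\arg\bigl((1-t^2e^{2\i\theta})^{-1/2}\bigr)\in(0,\pi/2)$, and the restriction $\theta<\pi/4$ is used precisely so that the extra factor $e^{2\i\theta}$ also has argument in $(0,\pi/2)$; adding $\Im(1/\pi+z/2)>0$ concludes, again with reflection for $\bH^-$. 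The trade-off: the paper's argument is more self-contained (no appeal to the conformal picture of $\arccos$, only sign-of-argument estimates on an explicit integral), while yours is shorter once the $\arccos$ mapping fact is granted and actually proves the stronger statement that $\g$ maps every ray with $\arg z\in(0,\pi/2)$, i.e.\ the whole open first quadrant, into $\bH^+$, whereas the paper's wedge $(0,\pi/4)$ is an artifact of the $e^{2\i\theta}$ factor. If you write your version up, do spell out why the principal branch $\kappa_0(z)=\frac{\pi+\i\log(z+\i\sqrt{1-z^2})}{\pi}$ used in the paper coincides with the inverse of $\cos$ on the strip, since that identification is the one nontrivial input of your sub-claim.
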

\begin{proof}
The second part of the statement follows from the first according to
the reflection principle. We only prove the first part here. Let $z=r e^{\i\theta}$ with $\theta\in (0,\pi/4)$. Taylor's theorem with integral form of the remainder and direct calculation give \[
\g(z) = \g(0) + \g'(0)z + \int_{\gamma} (z-w)\g''(w)dw = \frac{1}{\pi} + \frac{1}{2}z + \int_{\gamma} \frac{z-w}{\pi\sqrt{1-w^2}}dw\,,
\]
where $\gamma:[0,1]\to\bC$ is the simple straight line connecting $0$ and $z$ taking the form $\gamma(t)=t r e^{\i\theta}$. Then we have \[
\int_{\gamma} \frac{z-w}{\pi\sqrt{1-w^2}}dw = r^2 e^{2\i\theta}\int_0^1 \frac{1-t}{\pi\sqrt{1-r^2t^2 e^{2\i\theta}}} dt = e^{2\i\theta}\int_0^r \frac{r-t}{\pi\sqrt{1-t^2 e^{2\i\theta}}} dt\,.
\]
Since $\theta \in (0,\pi/4)$, we have $\arg(1-t^2 e^{2\i\theta})\in (-\pi,0)$. Further 
\[
\arg\left(\frac{1}{\sqrt{1-t^2 e^{2\i\theta}}}\right)\in (0,\pi/2)\qquad \textnormal{and}\qquad \arg \left(\int_0^r \frac{r-t}{\pi\sqrt{1-t^2 e^{2\i\theta}}} dt\right) \in (0,\pi/2)\,.
\]
Noting $\arg (e^{2\i\theta})\in (0,\pi/2)$, we get
\[\arg \left(\int_{\gamma} \frac{z-w}{\pi\sqrt{1-w^2}}dw\right) \in (0,\pi)\,,\]
which gives a positive imaginary part. Combining with $\Im(1/\pi+z/2) > 0$ yields the desired statement. 
\end{proof}

\begin{lemma}\label{lem:around-one}
For every $k\ge 1$ and $\eps>0$, there exists $\delta>0$ such that $\g^{(k)}$ is analytic on $B_1(\delta)\cap \bH^+$ and $B_1(\delta)\cap \bH^-$ with
\begin{align*}
    \g^{(k)}(B_1(\delta)\cap \bH^+) \subseteq{}& B_1(\eps)\cap \bH^+\,,\\
     \g^{(k)}(B_1(\delta)\cap \bH^-) \subseteq{}& B_1(\eps)\cap \bH^-\,.
\end{align*}
\end{lemma}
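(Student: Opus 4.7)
The plan is to induct on $k$; the base case $k=1$ is the main work, and both the induction step and the $\bH^-$ statement reduce quickly to the $\bH^+$ case.

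For the base case, analyticity of $\g$ on $B_1(\delta)\cap\bH^+$ is immediate since $B_1(\delta)\cap\bH^+\subset D$, and $|\g(z)-1|<\eps$ follows from continuity of $\g$ at $z=1$ with $\g(1)=1$. The nontrivial claim is $\Im\g(z)>0$. Refining Lemma \ref{lem:expansion-one} via $\arccos z = \sqrt{2(1-z)}\,A(z)$ (with $A$ analytic at $z=1$, $A(1)=1$) and $\sqrt{1-z^2} = \sqrt{1-z}\sqrt{1+z}$, one obtains a convergent Puiseux expansion
\[
\g(z) = 1 - (1-z) + \tfrac{1}{\pi}\sum_{k\ge 0} b_k\,(1-z)^{k+3/2},\qquad b_0 = \tfrac{2\sqrt{2}}{3},\ b_k\in\bR,
\]
valid on a neighborhood of $z=1$. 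Parameterizing $z-1 = re^{\i\phi}$ with $\phi\in(0,\pi)$, $r\in(0,\delta_0)$, and using $1-z = re^{\i(\phi-\pi)}$ (principal branch), a direct computation gives
\[
\Im\g(z) = r\sin\phi + \tfrac{2\sqrt{2}}{3\pi}r^{3/2}\cos(3\phi/2) + R(r,\phi),\qquad |R(r,\phi)|\le Cr^{5/2}.
\]

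I split $\phi\in(0,\pi)$ into three overlapping ranges. \textbf{(A)} For $\phi\in[\pi/6,5\pi/6]$, $\sin\phi\ge \tfrac12$, so $\Im z\ge r/2$ dominates. \textbf{(B)} For $\phi\in(0,\pi/6]$, $\cos(3\phi/2)\ge \tfrac{\sqrt{2}}{2}$ and $\Im z\ge 0$, so $\Im\g(z)\ge \tfrac{2}{3\pi}r^{3/2}-Cr^{5/2}>0$. \textbf{(C)} For $\phi\in[5\pi/6,\pi)$, set $\psi=\pi-\phi\in(0,\pi/6]$; using $\sin((k+3/2)(\phi-\pi))=-\sin((k+3/2)\psi)$ and $|\sin(m\psi)|\le m\psi$, every term of $\Im\g(z)$ acquires a factor of $\psi$, giving
\[
\Im\g(z) = r\sin\psi - \tfrac{2\sqrt{2}}{3\pi}r^{3/2}\sin(3\psi/2) + O(r^{5/2}\psi)\ge r\psi\bigl[\tfrac{1}{2} - \tfrac{\sqrt{2}}{\pi}r^{1/2} - O(r)\bigr]>0
\]
for $r$ small (using $\sin\psi\ge\psi/2$ and $\sin(3\psi/2)\le 3\psi/2$ on $(0,\pi/6]$).

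For the induction step, given $\eps>0$ the base case yields $\delta_1>0$ with $\g(B_1(\delta_1)\cap\bH^+)\subseteq B_1(\eps)\cap\bH^+$; applying the induction hypothesis with $\eps$ replaced by $\delta_1$ gives $\delta>0$ with $\g^{(k-1)}(B_1(\delta)\cap\bH^+)\subseteq B_1(\delta_1)\cap\bH^+$, and composition provides the conclusion (analyticity is preserved because the image of $\g^{(k-1)}$ lies in $D$). The $\bH^-$ statement follows from Schwarz reflection: $\g(\bar z)=\overline{\g(z)}$ on $D$, so $\bH^-$ is handled by complex conjugation of the $\bH^+$ case. The main obstacle is Case (C): when $z$ is just above $(-1,1)$, both $\Im z$ and the leading $r^{3/2}$ correction are individually proportional to $\psi$, so the uniform bound $|R|\le Cr^{5/2}$ is too crude; the rescuing observation is that each Puiseux term contributes $\sin((k+3/2)\psi)$ to the imaginary part, sharpening $R$ to $O(r^{5/2}\psi)$ in this regime and letting the leading $r\sin\psi$ term survive.
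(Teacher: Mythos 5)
Your proof is correct, and its overall skeleton (induction on $k$, composition for the induction step, Schwarz reflection for the $\bH^-$ statement) matches the paper's; the genuine difference is in how you establish the key positivity claim $\Im \g(z)>0$ for $z\in B_1(\delta)\cap\bH^+$. The paper gets this from a sector statement at the origin (\cref{lem:oct}): writing $\g(z)=\frac{1}{\pi}+\frac{z}{2}+\int_\gamma \frac{z-w}{\pi\sqrt{1-w^2}}\,dw$ by Taylor's theorem at $0$ with integral remainder and tracking arguments, it shows that every $z$ with $\arg z\in(0,\pi/4)$ is mapped into $\bH^+$; since $B_1(\delta)\cap\bH^+$ with $\delta<1/2$ lies in that sector, the base case follows without examining the local behaviour at $z=1$ at all. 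You instead argue locally at $z=1$, upgrading \cref{lem:expansion-one} to a full Puiseux expansion $\g(z)=1-(1-z)+\frac{1}{\pi}\sum_{k\ge 0}b_k(1-z)^{k+3/2}$ and splitting $\arg(z-1)$ into three regimes; the delicate regime where $z$ sits just above $(1-\delta,1)$ is correctly rescued by noting that each Puiseux term contributes $\sin\bigl((k+3/2)(\phi-\pi)\bigr)$ to the imaginary part, hence an extra factor of $\psi=\pi-\phi$, so the linear term $r\sin\psi$ survives. Your route is more computational and genuinely needs more than \cref{lem:expansion-one} as stated (the mere $o(1)$ control of $c_1(z)$ would not suffice in that regime, as you observe), but it stays entirely local to the singularity and yields an explicit quantitative lower bound on $\Im\g$ near $1$; the paper's route is shorter and gives a global sector-preservation fact, at the cost of proving the separate \cref{lem:oct}.
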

\begin{proof}
We present the proof for $\bH^+$ here and that for $\bH^-$ can be shown similarly. We adopt an induction argument on $k$.

For $k=1$, $\g$ is analytic on $\bH^+$. Since $\g$ is continuous at $z=1$, for any $\eps>0$, there exists $0<\delta<1/2$ such that \[
\g(B_1(\delta)\cap \bH^+)\subseteq B_1(\eps)\,.
\]
\cref{lem:oct} implies $\g(B_1(\delta)\cap \bH^+)\subseteq \bH^+$. Combining them yields \begin{equation}\label{eq:base-inclusion}
    \g(B_1(\delta)\cap \bH^+) \subseteq B_1(\eps)\cap \bH^+\,.
\end{equation}

Now assume that the statement holds true for some $k\ge 1$. Note that for any $\eps>0$, there exists $0<\delta<1/2$ such that \eqref{eq:base-inclusion} holds. Then by induction hypothesis, for this chosen $\delta$, there exists $\delta_1>0$ such that $\g^{(k)}$ is analytic on $B_1(\delta_1)\cap \bH^+$ and \[
\g^{(k)}(B_1(\delta_1)\cap \bH^+) \subseteq B_1(\delta)\cap \bH^+\,.
\]
It follows
\[
\g^{(k+1)}(B_1(\delta_1)\cap \bH^+) \subseteq \g(B_1(\delta)\cap \bH^+)\subseteq B_1(\eps)\cap \bH^+\,.
\]
This completes the proof.
\end{proof}

\begin{lemma}\label{lem:unit-disk}
$|\g(z)|\le 1$ for any $|z|\le 1$, where the equality holds if and only if $z=1$. 
\end{lemma}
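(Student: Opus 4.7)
The cleanest route is through the non-negativity of the Maclaurin coefficients of $\g = \kappa_1$. The arc-cosine kernel $\g(x^{\top}y)$ is a positive definite zonal kernel on $\bS^{d-1}$ for every dimension $d$: by the integral representation of \citet{cho2009kernel}, for $x,y\in \bS^{d-1}$,
\[
\g(x^{\top}y) \;=\; 2\,\E_{w\sim \cN(0,I)}\!\left[\sigma(w^{\top}x)\,\sigma(w^{\top}y)\right],
\]
with $\sigma$ the ReLU activation, exhibiting it as an averaged inner product of ReLU features and hence as positive definite on $\bR^{d}$ for each $d$. The Schoenberg characterization already quoted earlier in the excerpt then forces the Maclaurin expansion
\[
\g(z) \;=\; \sum_{n\ge 0} a_n z^n, \qquad a_n\ge 0, \qquad \sum_{n\ge 0} a_n \;=\; \g(1) \;=\; 1.
\]

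Since $\sum_n a_n<\infty$, the series converges absolutely and uniformly on the closed unit disk $\bar{B}_0(1)$, so $\g$ extends continuously there, agreeing with the analytic branch on $D\cap \bar{B}_0(1)$ and with the boundary values $\g(1)=1$ and $\g(-1)=0$. For any $z$ with $|z|\le 1$ the triangle inequality yields
\[
|\g(z)| \;\le\; \sum_{n\ge 0} a_n |z|^n \;\le\; \sum_{n\ge 0} a_n \;=\; 1,
\]
which is the claimed inequality.

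For the equality case, direct differentiation gives $\g'(u) = (\pi-\arccos u)/\pi$, so $a_1 = \g'(0) = 1/2>0$, and of course $a_0 = \g(0) = 1/\pi>0$. If $|\g(z)|=1$ with $|z|\le 1$, then the second inequality above is an equality, which together with $a_1>0$ forces $|z|=1$. The first (triangle) inequality is then an equality iff all nonzero terms $a_n z^n$ share a common argument; since $a_0>0$ is a positive real, this common argument must be $0$, so $a_n z^n\in \bR_{\ge 0}$ for every $n$ with $a_n>0$, and applying this to $n=1$ gives $z\in \bR_{\ge 0}$. Combined with $|z|=1$ this forces $z=1$, while the converse $|\g(1)|=1$ is immediate.

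The only nontrivial input is the non-negativity of the Maclaurin coefficients of $\g$, which I would justify by citing \citet{cho2009kernel} together with the Schoenberg lemma already quoted in the excerpt; once that is in hand the argument reduces to a triangle inequality with a standard equality analysis, so I do not expect any further technical obstacle.
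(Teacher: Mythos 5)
Your proof is correct, and its skeleton coincides with the paper's: bound $|\g(z)|$ by the triangle inequality applied to a Maclaurin series with non-negative coefficients summing to $\g(1)=1$, then analyze the equality case. The difference lies in how the non-negativity is obtained. The paper simply writes out the explicit expansion $\g(z)=\frac{1}{\pi}+\frac{z}{2}+\sum_{n\ge 1}\frac{(2n-3)!!}{(2n-1)n!2^n\pi}z^{2n}$, whose coefficients are visibly non-negative, whereas you derive $a_n\ge 0$ and $\sum_n a_n<\infty$ from the random-feature representation $\g(x^\top y)=2\,\E_{w\sim\cN(0,I)}[\sigma(w^\top x)\sigma(w^\top y)]$ of \citet{cho2009kernel}, valid in every dimension $d$, combined with the Schoenberg characterization already quoted in the paper. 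Your route is more conceptual and avoids computing the series, at the cost of invoking two external results; note, however, that the explicit expansion is needed anyway in the proof of \cref{lem:analytic}, where the first two terms $\frac{1}{\pi}+\frac{z}{2}$ yield the quantitative bound $|\g(z)|\le 1-\delta_\theta$ away from the positive real axis, so the paper gets the coefficient non-negativity essentially for free. Your equality analysis (using $a_0=\frac{1}{\pi}>0$ and $a_1=\frac{1}{2}>0$ to force first $|z|=1$ and then $z\in\bR_{\ge 0}$, hence $z=1$) is spelled out more carefully than the paper's one-line claim and is exactly the argument the explicit series supports, so there is no gap.
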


\begin{proof}
The Taylor series of $\g$ around $z=0$ is 
\[
\g(z) = \frac{1}{\pi} + \frac{z}{2} + \sum_{n=1}^\infty \frac{(2n-3)!!}{(2n-1)n!2^n \pi} z^{2n}\,.
\]
Therefore, for $|z|\le 1$, we have \[
|\g(z)| \le \frac{1}{\pi} + \frac{|z|}{2} + \sum_{n=1}^\infty \frac{(2n-3)!!}{(2n-1)n!2^n \pi} |z|^{2n} \le \g(1) = 1\,.
\]
The equality holds if and only if $z=1$. 
\end{proof}

\begin{lemma}\label{lem:analytic}
  
For each $k\geq 1$, there exists $R>1$ such that $\g^{(k)}$ is analytic on $\{z\in \bC~|~ |z|\le R\}\cap D$, where $D = \bC\setminus [1,\infty) \setminus (-\infty,-1] $.  
\end{lemma}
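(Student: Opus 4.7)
The plan is to induct on $k$, combining \cref{lem:unit-disk} inside the closed unit disk with \cref{lem:around-one} near the branch point $z=1$, a direct continuity argument near $z=-1$, and a compactness argument on the unit circle to push the domain slightly beyond $\bar{B}_0(1)$.

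The base case $k=1$ is immediate since $\g$ is single-valued and holomorphic on the entire set $D$, so any $R_1>1$ works. For the induction step, assume $\g^{(k)}$ is holomorphic on $\{|z|\le R_k\}\cap D$ for some $R_k>1$ and consider $\g^{(k+1)}=\g\circ \g^{(k)}$. Inside the closed unit disk, \cref{lem:unit-disk} gives $|\g(z)|<1$ for every $z\in \bar{B}_0(1)\setminus\{1\}$, so iteration shows that $\g^{(j)}(z)\in B_0(1)\subset D$ for all $j\ge 1$ and all $z\in \bar{B}_0(1)\cap D=\bar{B}_0(1)\setminus\{\pm 1\}$. Since each composition lands in $D$ where $\g$ is holomorphic, $\g^{(k+1)}$ is holomorphic on $\bar{B}_0(1)\cap D$.

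To enlarge the domain beyond the unit disk, the plan is to cover the unit circle by open neighborhoods on which $\g^{(k+1)}$ is analytic. For any $z_0$ on the unit circle with $z_0\ne\pm 1$, $|\g(z_0)|<1$ by \cref{lem:unit-disk}, so by continuity there exists an open disk $B_{z_0}(r)\subset D$ on which every iterate remains in $B_0(1)\subset D$, making $\g^{(k+1)}$ holomorphic on $B_{z_0}(r)$. Near $z=1$, \cref{lem:around-one} supplies $\delta>0$ such that $\g^{(k)}$ is analytic on $B_1(\delta)\cap \bH^+$ and on $B_1(\delta)\cap \bH^-$ and maps them into $B_1(\eps)\cap \bH^\pm\subset D$; combined with the analyticity on the real sliver $(1-\delta,1)\subset \bar{B}_0(1)\cap D$ already established, and the fact that $B_1(\delta)\cap D$ is connected, this yields analyticity of $\g^{(k+1)}$ on all of $B_1(\delta)\cap D$. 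Near $z=-1$, the continuous boundary value $\g(-1)=0$ lies deep inside $D$, so for small enough $\delta'>0$ one has $\g(B_{-1}(\delta')\cap D)\subset B_0(1/2)\subset D$, and applying the unit-disk analysis to the image gives holomorphy of $\g^{(k+1)}$ on $B_{-1}(\delta')\cap D$.

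By compactness of the unit circle, finitely many of these neighborhoods cover it, and their union together with $\bar{B}_0(1)\cap D$ contains $\{|z|\le R_{k+1}\}\cap D$ for some $R_{k+1}>1$, completing the induction. The main obstacle is the gluing at $z=1$: \cref{lem:around-one} produces analytic extensions on the two half-disks $B_1(\delta)\cap \bH^+$ and $B_1(\delta)\cap \bH^-$ separately, and one must verify that they paste correctly with the extension from the real interval $(1-\delta,1)$ to give one analytic function on $B_1(\delta)\cap D$. The resolution is that all three pieces are realized as the same composition $\g\circ\g^{(k)}$ using the principal-value branch of $\g$ fixed by \eqref{eq:branch}, so they are restrictions of a single holomorphic map on the connected domain $B_1(\delta)\cap D$.
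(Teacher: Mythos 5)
Your proof is correct and takes essentially the same route as the paper's: it rests on \cref{lem:unit-disk} to keep all iterates inside the open unit disk, \cref{lem:around-one} near $z=1$, and a compactness argument on the unit circle to push the domain slightly beyond $|z|=1$ (the paper organizes the covering instead as a uniform bound $|\g(z)|\le 1-\delta_\theta$ on the arc $\{|z|\le 1,\ |\arg z|\ge\theta\}$, which handles the neighborhood of $z=-1$ without a separate case). The one assertion you leave unjustified---that $\g$ extends continuously to $-1$ within $D$ with value $0$, which is not automatic at a branch point---is exactly what \cref{lem:expansion-minus-one} (case $k=1$) supplies, so nothing essential is missing.
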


\begin{proof}
For any $0<\theta < \pi/2$, there exists $\delta_\theta>0$ such that for all $|z|\le 1$ with $|\arg z| \ge \theta$, we have \[
|\g(z)| \le 1 - \delta_\theta\,.
\]
To see this, we use an argument similar to \citep{pinelis}.
If 
we define $\phi \triangleq \arg z$, we have \begin{align*}
& \left| \frac{1}{\pi} + \frac{z}{2} \right| 
= \sqrt{\frac{|z|^2}{4}+\frac{|z| \cos \phi }{\pi }+\frac{1}{\pi ^2}}\\
\le{}& \sqrt{\frac{1}{4}+\frac{ \cos \theta}{\pi }+\frac{1}{\pi ^2}}
= \sqrt{\left(\frac{1}{2} + \frac{1}{\pi}\right)^2-\frac{1-\cos \theta}{\pi}}
= \frac{1}{2} + \frac{1}{\pi} - \delta_{\theta}
\,,
\end{align*}
for some $\delta_\theta>0$. 
Consider the Taylor series of $\g$ around $z=0$
\[
\g(z) = \frac{1}{\pi} + \frac{z}{2} + \sum_{n=1}^\infty \frac{(2n-3)!!}{(2n-1)n!2^n \pi} z^{2n}\,.
\]
We obtain \[
|\g(z)| \le \left|\frac{1}{\pi} + \frac{z}{2} \right| + \sum_{n=1}^\infty \frac{(2n-3)!!}{(2n-1)n!2^n \pi} |z|^{2n} \le \frac{1}{2} + \frac{1}{\pi}-\delta_\theta + \sum_{n=1}^\infty \frac{(2n-3)!!}{(2n-1)n!2^n \pi} = 1 - \delta_\theta\,.
\]

\cref{lem:around-one} shows that there exists $0<\delta'<1$ such that $\g^{(k)}$ is analytic on $B_1(\delta')\cap D$.
From the argument above, we know that $\g$ maps $ A\triangleq  \{z\in \bC \mid |z|=1,|\arg z|\ge \theta\}$ to inside of the open unit ball $B_0(1)$. Since $A$ is compact and \cref{lem:unit-disk} implies that $g$ maps $B_0(1)$ to $B_0(1)$, there exists $1< R_\theta < 1 + \delta'$ such that $\g$ maps 
\[
A_{\theta}\triangleq \left(\{z\in \bC\mid  |z|\le R_\theta, |\arg z| \ge \theta\} \cap D\right) \cup B_0(1)
\]
to $B_0(1)$. It follows that $\g^{(k)}$ is analytic on $A_\theta$. Let us pick $\theta \in (0,\pi/2)$ such that $ e^{\i\theta}\in B_1(\delta')$. Then we conclude that $\g^{(k)}$ is analytic on $\{z\in \bC \mid |z|\le R_\theta\}\cap D$. \end{proof}

Now we are ready to prove \cref{thm:analytic_main}.
\begin{proof}
Since $\kappa_0$ and $\g$ are both analytic on $D = \bC\setminus [1,\infty) \setminus (-\infty,-1] $, similar arguments as in the proof of \cref{lem:analytic} shows that $\kappa_0(\g^{(k)}(z))$ is analytic on $\{z\in \bC\mid  |z|\le R\}\cap D$ for all $k\ge 1$ and some $R>1$. We then show, for any $k\geq 1$, there exists some $R_k>1$ such that $N_k(z)$ is analytic on $\{z\in \bC  \mid |z|\le R_k\}\cap D$ by induction. The function $N_0(z) = z+\beta^2$ is analytic on $D$. Assume $N_{k-1}(z)$ is analytic on $\{z\in \bC \mid |z|\le R_{k-1}\} \cap D$ for some $R_{k-1}>1$. Recall that \[
N_k(z) ={} \g^{(k)}(z) + N_{k-1}(z)\kappa_0(\g^{(k-1)}(z)) + \beta^2\,.
\]
Then we can find some $R_k>1$ such that $N_k(z)$ is analytic on $\{z \in \bC \mid |z|\le R_k\}\cap D$. 
\end{proof}

\subsection{Proof of \cref{thm:Kk}}\label{sec:NTK2}
\begin{proof}

We first analyze the behavior of $N_k(z)$ as $z\to 1$ for any $k\geq 1$. We aim to show, for any $k\geq 1$, there exists a sequence of complex functions $p_k(z)$ with $\lim_{z\to 1} p_k(z)=-\sqrt{2}(1+\beta^2)k(k+1)/2\pi$ such that
\begin{equation}\label{eq:Kk-around1}
    N_k(z) = (k+1)(z+\beta^2) + p_k(z)\sqrt{1-z}\,.
\end{equation}
We prove by induction on $k$. Recall 
\[\kappa_0(z) = \frac{ \pi +\i\log(z+\i\sqrt{1-z^2})}{\pi}\,.\]
The fundamental theorem of calculus then gives for any $z\in D$ \begin{equation*}
    \kappa_0(z) = 1 + \int_{\gamma} \frac{1}{\pi  \sqrt{1-w^2}} dw\,,
\end{equation*}
where $\gamma:[0,1]\to\bC$ is the simple straight line connecting $1$ and $z$. As $z\to 1$, we have $\frac{1}{\sqrt{1-z^2}}\sim \frac{1}{\sqrt{2}\sqrt{1-z}}$. Therefore, similar arguments as in the proof of \cref{lem:expansion-one} give \[
\kappa_0(z) = 1 +h(z) \sqrt{1- z}\,,
\]
where $\lim_{z\to 1} h(z) = -\frac{\sqrt{2}}{\pi }$. Combining with \cref{lem:expansion-one} further gives, for any $k\ge 1$
\[
\kappa_0(\g^{(k)}(z)) = 1 + h(\g^{(k)}(z))\sqrt{1-z-c_{k}(z)(1-z)^{3/2}} = 1 + h_k(z)\sqrt{1-z}\,,
\]
where $\lim_{z\to 1} h_k(z) = -\frac{\sqrt{2}}{\pi}$. For $k=1$, we then have
\begin{align*}
    N_1(z) ={}& \g(z) + (z+\beta^2)\kappa_0(z) + \beta^2
= z + d_1(z)(1-z)^{3/2} + (z + \beta^2)(1+ h(z)\sqrt{1-z}) + \beta^2\\
={}&  2(z+\beta^2) + p_1(z)\sqrt{1-z} \,,
\end{align*}
where $\lim_{z\to 1} d_1(z) = \frac{2\sqrt{2} }{3\pi} $ and $\lim_{z\to 1} p_1(z) = -\sqrt{2}(1+\beta^2)/\pi$. Assume $N_{k-1}(z) = k(z+\beta^2) + p_{k-1}(z)\sqrt{1-z}$ with $\lim_{z\to 1}p_{k-1}(z)=-\sqrt{2}(1+\beta^2)k(k-1)/(2\pi)$. We further have
\begin{align*}
    N_k(z) ={}& \g^{(k)}(z) + N_{k-1}(z)\kappa_0(\g^{(k-1)}(z)) + \beta^2\\
={}& z + d_k(z)(1-z)^{3/2} + \left(k(z+\beta^2) + p_{k-1}(z)\sqrt{1-z}\right)(1+h_{k-1}(z)\sqrt{1-z}) + \beta^2\\
={}& (k+1)(z+\beta^2) + (p_{k-1}(z) + k \cdot h_{k-1}(z)(z+\beta^2))\sqrt{1-z}\\
={}& (k+1)(z+\beta^2) + p_k(z)\sqrt{1-z}\,.
\end{align*}
where we set $p_k(z)=p_{k-1}(z) + k \cdot h_{k-1}(z)(z+\beta^2)$ and $d_k(z)\to \frac{2\sqrt{2}k}{3\pi}$, $h_{k-1}(z)\to -\frac{\sqrt{2}}{\pi}$ as $z\to 1$. Moreover, we have 
\begin{align*}
    \lim_{z\to 1}p_k(z)&=\lim_{z\to 1}\Big\{ p_{k-1}(z) + k \cdot h_{k-1}(z)(z+\beta^2)\Big\}\\
    &=-\frac{\sqrt{2}(1+\beta^2)k(k-1)}{2\pi}-k\cdot\frac{\sqrt{2}}{\pi}(1+\beta^2)\\
    &=-\frac{\sqrt{2}(1+\beta^2)k(k+1)}{2\pi}\,,
\end{align*}
which is desired. This proves \eqref{eq:Kk-around1}.

Next we study the behavior of $N_k(z)$ as $z\to -1$ for any $k\geq 1$. We aim to show, for any $k\geq 1$, there exists a sequence of complex functions $q_k(z)$ with $\lim_{z\to -1}q_k(z)=\sqrt{2}(\beta^2-1)\prod_{j=1}^{k-1}\kappa_0(a_j)/\pi$ and $a_k \triangleq \g^{(k)}(-1)$ as defined in \cref{lem:expansion-minus-one} such that 
\begin{equation}\label{eq:Kk-around-1}
    N_k(z) = N_k(-1) + q_k(z)\sqrt{1+z}\,.
\end{equation}
We again adopt induction on $k$. Taylor's theorem gives
\[
\kappa_0(z) = \kappa_0(a_k) + r_k(z)(z- a_k)
\,,
\]
where $\lim_{z\to a_k} r_k(z)= \kappa'_0(a_k)>0$. Combining with \cref{lem:expansion-minus-one} further gives, for any $k\geq 1$
\[
\kappa_0(\g^{(k)}(z)) = \kappa_0(a_k) + r_k(\g^{(k)}(z))b_k(z)(z+1)^{3/2}
= \kappa_0(a_k) + \Tilde{r}_k(z)(z+1)^{3/2}
\,,
\]
where $b_k(z)\to \frac{2\sqrt{2}}{3\pi} \prod_{j=1}^{k-1}\g'(a_k)$ and $\Tilde{r}_k(z)\to \frac{2\sqrt{2}}{3\pi} \kappa'_0(a_k)  \prod_{j=1}^{k-1}\g'(a_k) > 0 $ as $z\to -1$ by \cref{lem:expansion-minus-one}. For $k=1$, the fundamental theorem of calculus gives for any $z\in D$ \begin{equation*}
    \kappa_0(z) = \int_{\gamma} \frac{1}{\pi  \sqrt{1-w^2}} dw\,,
\end{equation*}
where $\gamma:[0,1]\to\bC$ is the simple straight line connecting $-1$ and $z$. As $z\to -1$, we have $\frac{1}{\sqrt{1-z^2}}\sim \frac{1}{\sqrt{2}\sqrt{1+z}}$. Therefore, similar arguments as in the proof of \cref{lem:expansion-one} give
\[
\kappa_0(z) =  g(z)\sqrt{1+z}\,,
\]
where $g(z)\to \frac{\sqrt{2}}{\pi}$ as $z\to -1$. We then have
 \begin{align*}
    N_1(z) ={}& \g(z) + (z+\beta^2)\kappa_0(z) + \beta^2\\
    ={}& a_1 + b_1(z)(z+1)^{3/2} + (z+\beta^2)g(z)\sqrt{1+z} + \beta^2\\
    ={}& (a_1+\beta^2) + q_1(z)\sqrt{1+z}\\
    ={}& N_1(-1) + q_1(z)\sqrt{1+z}\,,
\end{align*}
where $N_1(-1)=a_1+\beta^2$ $\lim_{z\to -1} q_1(z)= \frac{\sqrt{2}}{\pi}(\beta^2-1)$. Assume $N_{k-1}(z)=N_{k-1}(-1)+q_{k-1}(z)\sqrt{1+z}$ with $\lim_{z\to -1}q_{k-1}(z)=\sqrt{2}(\beta^2-1)\prod_{j=1}^{k-2}\kappa_0(a_j)/\pi$. We further have 
\begin{align*}
     N_k(z)
    ={}& \g^{(k)}(z) + N_{k-1}(z)\kappa_0(\g^{(k-1)}(z)) + \beta^2\\
    ={}& a_k + b_k(z)(z+1)^{3/2} + N_{k-1}(z)\left( \kappa_0(a_{k-1}) + \Tilde{r}_{k-1}(z)(z+1)^{3/2} \right) + \beta^2\\
    ={}& \left(a_k+\beta^2+N_{k-1}(z)\kappa_0(a_{k-1})\right)+\left( b_k(z)+N_{k-1}(z)\Tilde{r}_{k-1}(z) \right)(z+1)^{3/2}\\
    ={}& \left(a_k+\beta^2+N_{k-1}(-1)\kappa_0(a_{k-1})\right)+q_{k-1}(z)\kappa_0(a_{k-1})\sqrt{z+1}\\
    &+\left( b_k(z)+N_{k-1}(z)\Tilde{r}_{k-1}(z) \right)(z+1)^{3/2}\\
    ={}& N_k(-1)+q_{k-1}(z)\kappa_0(a_{k-1})\sqrt{z+1}+\left( b_k(z)+N_{k-1}(z)\Tilde{r}_{k-1}(z) \right)(z+1)^{3/2}\\
    ={}& N_k(-1) + q_k(z)\sqrt{1+z}\,,
\end{align*}
where we use the induction assumption in the fourth equation, use the fact $N_k(-1)=a_k+\beta^2+N_{k-1}(-1)\kappa_0(a_{k-1})$ in the fifth equation and define \[q_k(z)=q_{k-1}(z)\kappa_0(a_{k-1})+\left( b_k(z)+N_{k-1}(z)\Tilde{r}_{k-1}(z) \right)(z+1)\] in the last equation. We also have 
\begin{align*}
    \lim_{z\to -1} q_k(z)=&\lim_{z\to -1}\Big\{q_{k-1}(z)\kappa_0(a_{k-1})+\left( b_k(z)+N_{k-1}(z)\Tilde{r}_{k-1}(z) \right)(z+1)\Big\}\\
    =&\lim_{z\to -1}\Big\{q_{k-1}(z)\kappa_0(a_{k-1})\Big\}\\
    =&\frac{\sqrt{2}(\beta^2-1)}{\pi}\prod_{j=1}^{k-1}\kappa_0(a_j)\,,
\end{align*}
which is desired. This proves \eqref{eq:Kk-around-1}.

Finally, according to \cref{thm:analytic_main}, combining \eqref{eq:Kk-around1} and \eqref{eq:Kk-around-1}, applying 
 \cite[Theorem VI.5]{flajolet2009analytic} with $\rho=1$, $r=2$, $\tau(z) = (1-z)^{1/2}$, $\zeta_1=1$, $\zeta_2=-1$, $\sigma_1(z) = (k+1)(z+\beta^2)$, $\sigma_2(z) = N_k(-1)$, $\bfD = \{z\in \bC\mid |z|\le R_k\}\cap D$,
we conclude $[z^n]N_k(z) = O(n^{-3/2})$. 
\end{proof}

\section{Proofs for Exponential Power Kernel}
\subsection{Proof of \cref{lem:series}}\label{sec:proof-series}
\begin{proof}

According to \cite[Theorem 28.2]{doetsch1974introduction}, we have, for $0<a<1$,
\[
f(t) = \frac{1}{2\pi \i}\lim_{T\to +\infty}\int_{x_0-\i T}^{x_0+\i T} \exp(ts-s^a) ds\qquad (x_0\ge 0)\,.
\]
Also \cite[Theorem 28.2]{doetsch1974introduction} implies that $f(t)$ is continuous in $-\infty<t<+\infty$ and $f(0)=0$. 

\begin{figure}[htb]
    \centering
    \includegraphics[]{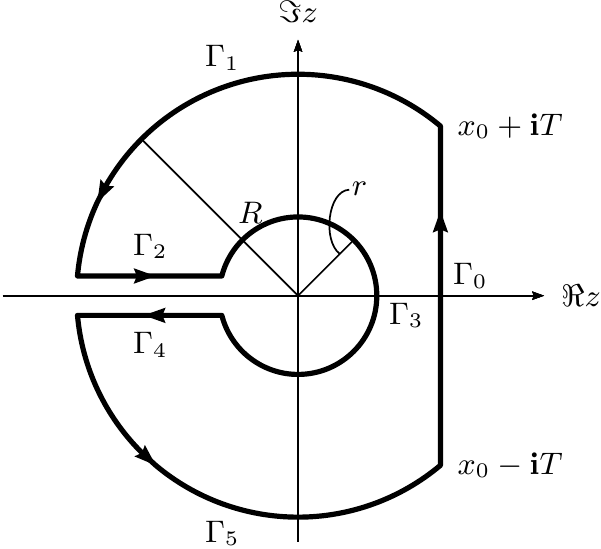}
    \caption{Bromwich contour that circumvents the branch cut $(-\infty,0]$}
    \label{fig:contour}
\end{figure}

Next we explicitly calculate $f(t)$ using Bromwich contour integral. We denote each part of the Bromwich contour by $\Gamma_0,\ldots,\Gamma_5$ as depicted in \cref{fig:contour}. Denote the radius of the outer and inner arc by $R$ and $r$. When $T\to\infty$, we have $R=\sqrt{T^2+x_0^2}\to\infty$. Also we let $r\to 0$ and $\Gamma_2,\Gamma_4$ tend to $(-\infty,0]$ from above and below respectively in the limit. By the residue theorem, we have
\[\left(\int_{\Gamma_0}+\ldots+\int_{\Gamma_5}\right)\exp(ts-s^a)  ds=0\,,\]
which implies
\begin{align*}
    \lim_{T\to\infty}\int_{x_0-\i T}^{x_0+\i T} \exp(ts-s^a) ds &=\lim_{T\to\infty}\int_{\Gamma_0} \exp(ts-s^a) ds\\
    &=-\lim\Big(\int_{\Gamma_1}+\ldots+\int_{\Gamma_5}\Big)\exp(ts-s^a) ds\\
    &\triangleq -\lim(I_1+\ldots+I_5)\,,
\end{align*}
where the last two limits are taken as $R\to\infty$, $r\to 0$, and $\Gamma_2,\Gamma_4$ tend to $(-\infty,0]$. We then calculate each part separately.

\bigskip

{\bf Part I:} We calculate the parts for $\Gamma_1$ and $\Gamma_5$. We follow the similar idea as in the proof of \cite[Theorem 7.1]{spiegel1965laplace}. Along $\Gamma_1$, since $s=Re^{\i\theta}$ with $\theta_0\leq \theta\leq \pi$, $\theta_0=\arccos(x_0/R)$,
\begin{align*}
    I_1&=\int_{\theta_0}^{\pi/2} e^{Re^{\i\theta}t}e^{-R^a e^{\i a\theta}}\i Re^{\i\theta}d\theta+\int_{\pi/2}^{\pi} e^{Re^{\i\theta}t}e^{-R^a e^{\i a\theta}}\i Re^{\i\theta}d\theta\\
    &\triangleq I_{11}+I_{12}\,.
\end{align*}

For $I_{11}$,
\begin{align*}
    |I_{11}|&\leq \int_{\theta_0}^{\pi/2} |e^{Rt\cos\theta}|\cdot |e^{-R^a \cos(a\theta)}| R d\theta\\
    &\leq \int_{\theta_0}^{\pi/2} e^{Rt\cos\theta}\cdot e^{-R^a \cos(a\pi/2)} R d\theta\\
    &\leq \frac{R}{R^{a\cos(a\pi/2)} }\int_{\theta_0}^{\pi/2} e^{Rt\cos\theta} d\theta\\
    &= \frac{R}{R^{a\cos(a\pi/2)} }\int_{0}^{\phi_0} e^{Rt\sin\phi} d\phi\,,
\end{align*}
where $\phi_0=\pi/2-\theta_0=\arcsin(x_0/R)$. Since $\sin\phi\leq\sin\phi_0\leq x_0/R$, we have
\begin{align*}
    |I_{11}|\leq \frac{R}{R^{a\cos(a\pi/2)}}\phi_0 e^{x_0 t}=\frac{R}{R^{a\cos(a\pi/2)}} e^{x_0 t}\arcsin(x_0/R)\,.
\end{align*}
As $R\to \infty$, we have $\lim_{R\to \infty}I_{11}=0$. 

For $I_{12}$,
\begin{align*}
    |I_{12}|\leq \int_{\pi/2}^{\pi} e^{Rt\cos\theta}\cdot e^{-R^a \cos(a\theta)} R d\theta\,.
\end{align*}
First, we consider the case $0<a<1/2$. We have $a\theta \le a\pi < \pi/2$ and $\cos(a\theta) \ge \cos(a\pi) > 0$. It follows
\begin{align*}
& \int_{\pi/2}^{\pi} e^{Rt\cos\theta}\cdot e^{-R^a \cos(a\theta)} R d\theta\\
\le{}& R e^{-R^a \cos(a\pi)} \int_{\pi/2}^{\pi} e^{Rt\cos\theta}  d\theta\\
={}& R e^{-R^a \cos(a\pi)} \int_{0}^{\pi/2} e^{-Rt\sin \phi}  d\phi\\
\le{}& R e^{-R^a \cos(a\pi)} \int_{0}^{\pi/2} e^{-2Rt \phi/\pi}  d\phi\\
={}&  e^{-R^a \cos(a\pi)}\frac{\pi(1 -  e^{-R t})}{2  t}\,,
\end{align*}
where in the last inequality we use the fact $\sin\phi \ge 2\phi/\pi$ for $\phi\in [0,\pi/2]$. Thus, $\lim_{R\to\infty} I_{12}=0$. Next, we consider $1/2\le a< 1$. Define \[p(\theta)\triangleq Rt\cos\theta-R^a \cos(a\theta)\,.\] We then have its second derivative as follows \[p''(\theta) = a^2 R^a \cos (a \theta )-R t \cos (\theta )\,.\] 
Choose $\delta$ to be a fixed constant in $(0,\frac{\pi}{2}(\frac{1}{a}-1))$.
Since $a\ge 1/2$, then $\delta < \pi/2$.
If $\pi/2+\delta \le \theta \le \pi$,
\[
p''(\theta) \ge -a^2 R^a - R t \cos(\pi/2+\delta)
=  -a^2 R^a + R t \sin(\delta)\,.
\]
Since $a<1$, there exists some large $R_1>0$ such that $p''(\theta)\ge -a^2 R^a + R t \sin(\delta)>0$ holds for all $R>R_1$. 
If $\pi/2\le \theta < \pi/2 + \delta$,
\[
p''(\theta) \ge a^2 R^a \cos(a(\pi/2+\delta))\,.
\] 
Since $a(\pi/2+\delta) < \pi/2$ by the choice of $\delta$, we get $\cos(a(\pi/2+\delta))>0$. Then we also have $p''(\theta)>0$. Therefore, if $R>R_1$, $p(\theta)$ is convex in $\theta\in [\pi/2,\pi]$. As a result, we get \[
\max_{\theta\in [\pi/2,\pi]} p(\theta) \le \max \{ p(\pi/2),p(\pi) \}\,.
\]
Write \[h(R,\theta)\triangleq R e^{Rt\cos\theta}\cdot e^{-R^a \cos(a\theta)}
= R e^{p(\theta)}\,.\]
Then we have \begin{align*}
\max_{\theta\in [\pi/2,\pi]} h(R,\theta) \le{}& \max\{ h(R,\pi/2),h(R,\pi) \}\\ 
={}& R \max\{ e^{-R^a \cos \left(\frac{\pi  a}{2}\right)},  e^{-R^a \cos (\pi  a)-R t}\}\\
\le{}& R \max\{ e^{-R^a \cos \left(\frac{\pi  a}{2}\right)},  e^{R^a -R t}\} \,,
\end{align*}
which goes to $0$ as $R\to\infty$.
Therefore, $h(R,\theta)$ converges to $0$ uniformly (as a function of $\theta\in [\pi/2,\pi]$ with index $R$), which implies
\[\lim_{R\to \infty} \int_{\pi/2}^{\pi} h(R,\theta) d\theta = 0\,.\] 
Hence, we establish $\lim_{R\to \infty} I_{12}=0$ for all $a\in(0,1)$.

Combining these above, we conclude $\lim_{R\to\infty} I_1=0$. Similarly, $\lim_{R\to\infty} I_5=0$.

{\bf Part II:} We calculate the parts for $\Gamma_2$ and $\Gamma_4$. By the dominated convergence theorem, we have, for $y>0$
\begin{align*}
     \lim_{\substack{R\to\infty\\r\to 0 \\ y\to 0^+}} I_2 ={}& \lim_{\substack{R\to\infty\\r\to 0 \\ y\to 0^+}}\int_{-R+\i y}^{-r+\i y} \exp(ts) \exp(-s^a) ds \\
    ={}&  \lim_{\substack{R\to\infty\\r\to 0 \\ y\to 0^+}}\int_{-R+\i y}^{-r+\i y} \exp(ts) \sum_{k=0}^\infty \frac{(-1)^k s^{ak}}{k!}ds \\
    ={}&   \lim_{\substack{R\to\infty\\r\to 0 \\ y\to 0^+}}\sum_{k=0}^\infty \frac{(-1)^k }{k!}  \int_{-R+\i y}^{-r+\i y} \exp(ts) s^{ak} ds \,.
\end{align*}
We then calculate the limit of the summand. 
\begin{align*}
    \lim_{\substack{R\to\infty\\r\to 0 \\ y\to 0^+}} \int_{-R+\i y}^{-r+\i y} \exp(ts) s^{ak} ds&=\int_{-\infty}^0 e^{tx}\cdot [(-x)e^{\i\pi}]^{ak} dx\\
    &=\int_0^{\infty} e^{-tx} x^{ak} e^{\i\pi ak}dx\\
    &=\frac{1}{t^{ak+1}}\Gamma(ak+1)e^{\i\pi ak}\,.
\end{align*}
Similarly, we obtain the corresponding part in $\Gamma_4$:
\begin{align*}
    \lim_{\substack{R\to\infty\\r\to 0 \\ y\to 0^-}} \int_{-r+\i y}^{-R+\i y} \exp(ts) s^{ak} ds&=-\int_{-\infty}^0 e^{tx}\cdot [(-x)e^{-\i\pi}]^{ak} dx\\
    &=-\frac{1}{t^{ak+1}}\Gamma(ak+1)e^{-\i\pi ak}\,.
\end{align*}
Combining the parts of $\Gamma_2$ and $\Gamma_4$ together, we get
\begin{align*}
    \lim(I_2+I_4)=\sum_{k=0}^\infty \frac{(-1)^k }{k!} \frac{2\i\Gamma(ak+1)\sin(\pi ak)}{t^{ak+1}}\,.
\end{align*}

{\bf Part III:} We get the limit for $\Gamma_3$ is $0$ as $r\to 0$.
\bigskip

Combining the three parts above, we conclude
\begin{align*}
    f(t)&=\frac{1}{2\pi\i} \sum_{k=0}^\infty \frac{(-1)^{k+1} }{k!} \frac{2\i\Gamma(ak+1)\sin(\pi ak)}{t^{ak+1}}\\
    &=\frac{1}{\pi}\sum_{k=0}^\infty  \frac{(-1)^{k+1}\Gamma(ak+1)\sin(\pi ak)}{k! t^{ak+1}}\,.
\end{align*}

\end{proof}

\subsection{Proof of \cref{lem:tail}}\label{sec:proof-tail}
\begin{proof}
Euler's reflection formula gives \[
\Gamma(1+k a)\Gamma(-k a) = \frac{-\pi}{\sin(\pi k a)},\quad k a\notin \bZ\,.
\]
According to \cref{lem:series}, we have
\begin{align}
    f(t) ={}& \frac{1}{\pi}\sum_{k=0}^\infty  \frac{(-1)^{k+1}\Gamma(ak+1)\sin(\pi ak)}{k! t^{ak+1}}\nonumber\\
    ={}& \sum_{k=0}^{\infty} \frac{(-1)^k}{k! t^{ak+1} \Gamma(-ak) }\nonumber\\
={}& \sum_{j=1}^{q-1}\sum_{n=0}^\infty  \frac{(-1)^{nq+j}}{(nq+j)! t^{a(nq+j)+1}\Gamma(-a(nq+j))} \,.\label{eq:decompose-series}
\end{align}

First, we show that the series in \eqref{eq:decompose-series}
converges absolutely:
\begin{align}
& \sum_{j=1}^{q-1} \sum_{n=0}^\infty \frac{ |t|^{-a (nq+j)-1}}{(nq+j)! |\Gamma (-a (nq+j))|} \nonumber \\
={}& \sum_{j=1}^{q-1} \frac{1}{|t|^{aj+1}} \sum_{n=0}^\infty \frac{ |t|^{-np}}{(nq+j)! |\Gamma (-a (nq+j))|}\nonumber  \\
={}& \sum_{j=1}^{q-1} \frac{1}{|t|^{aj+1} |\Gamma (-a j)|} \sum_{n=0}^\infty \frac{ |t|^{-np} \prod_{i=1}^{n p} (aj+i)}{(nq+j)! }\,. \label{eq:power-series} 
\end{align}
The inner summation in \eqref{eq:power-series} is a power series in $|t|^{-p}$. We would like to show that its radius of convergence is $\infty$. Define \[
b_n = \frac{ \prod_{i=1}^{n p} (aj+i)}{(nq+j)! }\,.
\]
We have \begin{align*}
    \frac{b_{n+1}}{b_n} ={}&  \frac{\prod_{np<i\le (n+1)p} (aj+i)}{\prod_{nq<i\le (n+1)q} (j+i)} = \frac{\prod_{i=1}^p \frac{aj+np+i}{j+nq+i} }{\prod_{i=nq+p+1}^{(n+1)q} (j+i) }\\
    \le{}& \frac{1 }{\prod_{i=nq+p+1}^{(n+1)q} (j+i) } \le \frac{1}{(j+nq+p+1)^{q-p}}\to 0 \,.
\end{align*}
As a result, the radius of convergence is $\infty$. Then we have 
\begin{align*}
    f(t)
    ={}& \sum_{j=1}^{q-1} \frac{1}{t^{aj+1} \Gamma (-a j)}\sum_{n=0}^\infty \frac{ (-1)^{n(p+q)+j} t^{-pn} \prod_{i=1}^{np}(aj+i)}{(nq+j)! }\\
    ={}& \sum_{j=1}^{q-1} \frac{1}{t^{aj+1} \Gamma (-a j)}\left(
    \frac{(-1)^j}{j!}+ \underbrace{\sum_{n=1}^\infty \frac{ (-1)^{n(p+q)+j} t^{-pn} \prod_{i=1}^{np}(aj+i)}{(nq+j)! }}_{A}\right)
\end{align*}
Notice that the quantity $A$ goes to $0$ 
as $t\to +\infty$. Therefore we deduce \[
f(t)
\sim 
\sum_{j=1}^{q-1} \frac{(-1)^j}{t^{aj+1}j! \Gamma (-a j)} \sim -\frac{1}{t^{a+1}\Gamma(-a)}\,, 
\]
as $t\to+\infty$.
\end{proof}

\end{document}